\theoremstyle{plain}                   
\newtheorem{theorem}{Theorem}[section] 
\newtheorem{lemma}[theorem]{Lemma}     
\newtheorem{proposition}[theorem]{Proposition}
\theoremstyle{definition}              
\newtheorem{definition}[theorem]{Definition}
\newtheorem{assumption}[theorem]{Assumption}
\theoremstyle{remark}
\newcommand{\1}{\mathbf{1}}           
\newcommand{\diag}{\operatorname{diag}}
\definecolor{iccvblue}{rgb}{0.21,0.49,0.74}
\title{Falcon: Fractional Alternating Cut with Overcoming minima\\ in Unsupervised Segmentation}
\author{
  Xiao Zhang\textsuperscript{1} \quad
  Xiangyu Han\textsuperscript{1} \quad
  Xiwen Lai\textsuperscript{1} \quad
  Yao Sun\textsuperscript{2} \quad
  Pei Zhang\textsuperscript{3} \quad
  Konrad Kording\textsuperscript{1} \quad
  \vspace{2mm} \\ 
  $^{1}$University of Pennsylvania \quad 
  $^{2}$Hong Kong Polytechnic University \quad 
  $^{3}$Wuhan University \\
}
\begin{document}
\maketitle
\begin{abstract}
Today's unsupervised image segmentation algorithms often segment suboptimally. Modern graph-cut based approaches rely on high-dimensional attention maps from Transformer-based foundation models, typically employing a relaxed Normalized Cut solved recursively via the Fiedler vector (the eigenvector of the second smallest eigenvalue). Consequently, they still lag behind supervised methods in both mask generation speed and segmentation accuracy. We present a regularized fractional alternating cut (Falcon), an optimization-based K-way Normalized Cut without relying on recursive eigenvector computations, achieving substantially improved speed and accuracy. Falcon operates in two stages: (1) a fast K-way Normalized Cut solved by extending into a fractional quadratic transformation, with an alternating iterative procedure and regularization to avoid local minima; and (2) refinement of the resulting masks using complementary low-level information, producing high-quality pixel-level segmentations. Experiments show that Falcon not only surpasses existing state-of-the-art methods by an average of 2.5\% across six widely recognized benchmarks (reaching up to 4.3\% improvement on Cityscapes), but also reduces runtime by around 30\% compared to prior graph-based approaches. These findings demonstrate that the semantic information within foundation-model attention can be effectively harnessed by a highly parallelizable graph cut framework. Consequently, Falcon can narrow the gap between unsupervised and supervised segmentation, enhancing scalability in real-world applications and paving the way for dense prediction-based vision pre-training in various downstream tasks. 
The code is released in \url{https://github.com/KordingLab/Falcon}.

\end{abstract}   
\vspace{-5mm}
\section{Introduction}
\vspace{-3mm}
\label{sec:intro}
\begin{figure}[t]
    \centering
    \includegraphics[width=0.40\textwidth]{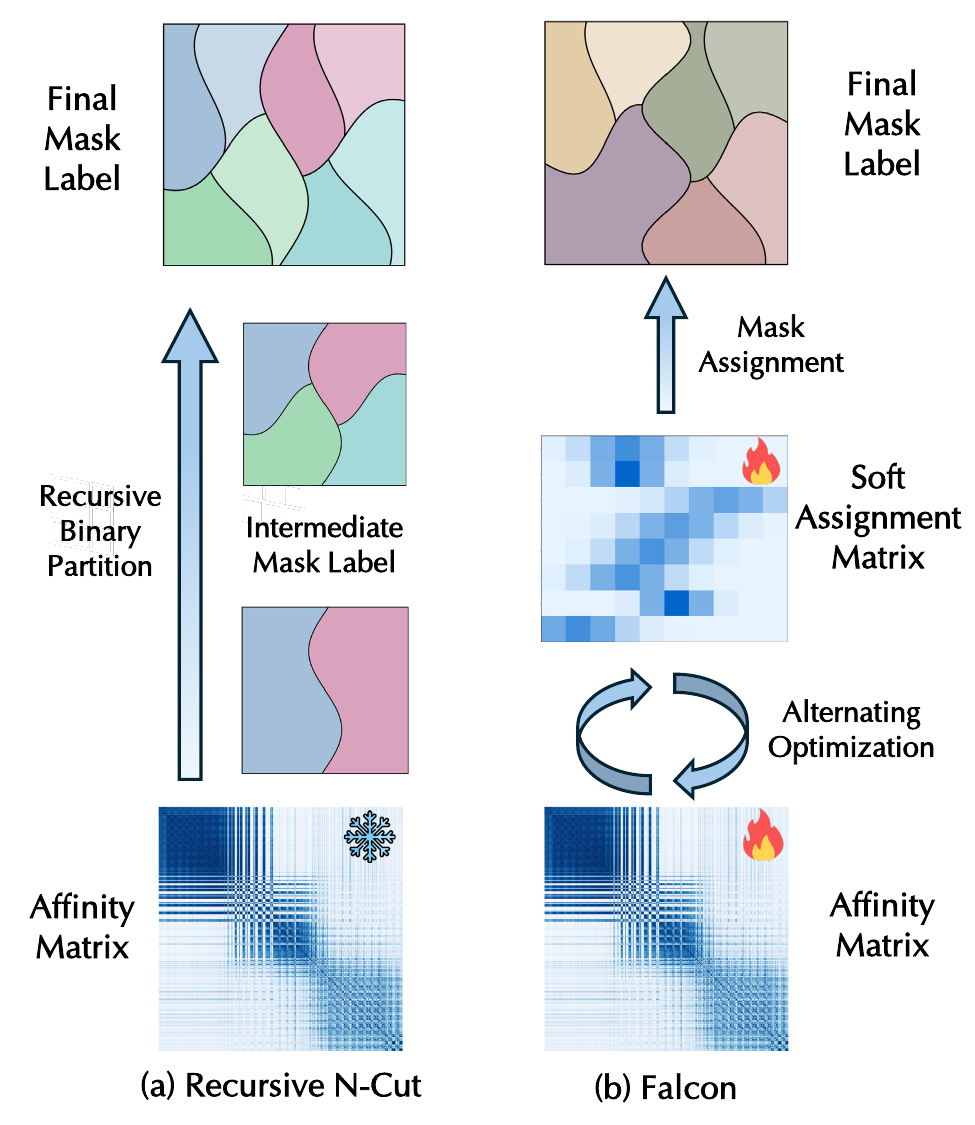}
    \vspace{-2mm}
    \caption{\textbf{Recursive N-Cut vs Falcon (ours).} Our method addresses the graph cut problem by alternately optimizing and regularizing both the soft assignment matrix and the affinity matrix, distinguishing it from recursive N-Cut methods.}
    \label{fig: Teaser figure}
    \vspace{-4mm}
\end{figure}

\begin{figure*}[t]
    \centering
    \includegraphics[width=0.92\textwidth]{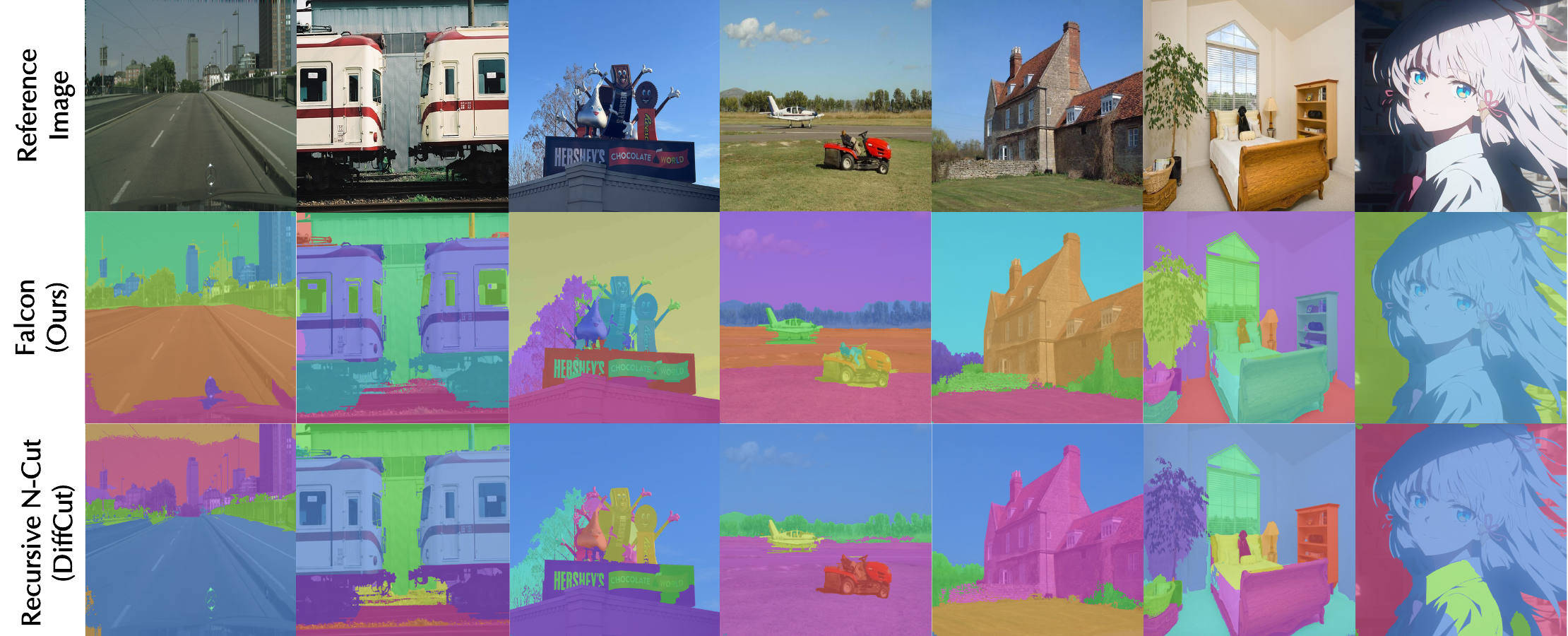}
    \vspace{-2mm}
    \caption{\textbf{Falcon Visual Segmentation Comparison.} Our \textbf{Falcon} method employs a fractional alternating optimized n‑Cut strategy, enhanced by multiple regularization techniques that effectively overcome local minima. Compared to \textbf{DiffCut}, \textbf{Falcon} reveals higher degree of fine details, for example, in the first image it distinguishes the car front, lanes, distant trees, and high-rise structures; in the second, it segments the intricate details of the train body; in the third, it separates billboards from rooftops; in the fourth, it isolates a child resting on a tractor; in the fifth, it clearly differentiates the castle’s surrounding walls and vines; in the sixth, it extracts items within a cabinet; in the final image, virtual environment with complex lighting conditions, \textbf{Falcon} robustly segments the complete human figure.}
    \label{fig: Results figure}
    \vspace{-4mm}
\end{figure*}

\quad Semantic segmentation partitions an image into regions whose pixels share the same semantics (i.e. being part of the same object), which matters for AI systems in terms of  perception, reasoning, planning, and acting in an object-centric manner~\cite{wang2023cut, zhu2016beyond}. As a critical and fundamental computer vision task, semantic segmentation underpins numerous downstream applications, including image editing, medical imaging, and autonomous driving~\cite{esser2023structure,zhou2019unet++, li2022hdmapnet, han2024extrapolated}. Semantic segmentation is a key computer vision task with countless downstream applications.

Semantic segmentation as rapidly improved over the last decade. Fully Convolutional Networks (FCNs) marked a major breakthrough in semantic segmentation~\cite{long2015fully}. Subsequently, numerous improved architectures—such as SegNet~\cite{hu2018squeeze}, U-Net~\cite{wang2021remote}, the DeepLab series~\cite{chen2017deeplab}, and PSPNet~\cite{zhao2017pyramid}—were developed, with a focus on capturing multi-scale features and fusing low-level and high-level information for more accurate segmentation outcomes. The advent of instance segmentation and panoptic segmentation further broadened the applicability of segmentation tasks, with Mask R-CNN~\cite{he2017mask} becoming a standard baseline~\cite{sapkota2024comparing, Zhang2020MaskRefinedRA}. Following the success of the Vision Transformer (ViT)~\cite{dosovitskiy2020image} in image classification, a variety of Transformer-based segmentation networks have emerged~\cite{oquab2023dinov2, caron2021emerging}, offering enhanced global modeling capabilities and opening new research avenues in image segmentation, such as SegFormer~\cite{xie2021segformer} Mask2Former~\cite{cheng2022masked}. More recently, the Segment Anything Model (SAM)~\cite{ravi2024sam2}, which leverages large-scale data (1.1B segmentation annotations), ViT-driven representations, and prompt-based segmentation, has introduced a new paradigm in computer vision. However, all these segmentation models require pixel-level annotations, which are both difficult and resource-intensive to obtain. 
           
 The difficulty to obtain good data has driven interest in unsupervised approaches, such as zero-shot unsupervised segmentation~\cite{tian2024diffuse, couairon2025diffcut}, where the goal is to segment images containing previously unseen categories—an inherently more challenging problem. Unsupervised image segmentation has recently advanced by incorporating self-supervised learning and traditional computer vision principles into deep learning pipelines~\cite{sick2024unsupervised}. Self-supervised learning (SSL) produces meaningful feature representations without requiring annotations~\cite{wen2022self, ziegler2022self}. For instance, STEGO~\cite{hamilton2022unsupervised} employs contrastive learning to extract patch-level features and refines segmentation masks through knowledge distillation and post-processing techniques such as Conditional Random Field (CRF). Another important approach incorporates traditional segmentation techniques such as clustering and graph-based optimization into deep learning pipelines~\cite{tian2024diffuse}. Invariant Information Clustering (IIC)~\cite{ji2019invariant} and PiCIE~\cite{cho2021picie} formulate segmentation as an unsupervised clustering problem, enforcing invariance and equivariance constraints to group pixels into semantically consistent regions. These algorithms use self supervised learning but no explicit cutting operations.

Other methods do use cutting operations. Graph-based methods such as TokenCut~\cite{wang2023tokencut} and MaskCut~\cite{wang2023cut} recursively leverage Normalized Cut (N-Cut) on deep feature representations. More recently, DiffCut~\cite{,couairon2025diffcut} integrates graph optimization with diffusion models to enhance connectivity between pixels, leading to improved segmentation consistency. These methods demonstrate the effectiveness of combining self-supervised representations with graph-based formulations. However, these approaches, as shown in \autoref{fig: Teaser figure} that recursively partition the feature space via a relaxed Normalized Cut and the Fiedler vector~\cite{shi2000normalized,ng2001spectral} often face three major limitations. First, their hierarchical modeling tends to be suboptimal: each recursive step is a greedy partition that cannot be globally refined. Second, by relaxing the N-Cut~\cite{shi2000normalized} objective, the segmentation procedure is prone to locally optimal solutions, especially when relying on the second eigenvector for a strict binary split. Third, the repeated eigen-decomposition across scales increases computational overhead, resulting in slower inference. We may believe that overcoming these three problems should increase speed and improve solutions.

To tackle these challenges, we propose Falcon, a \textbf{F}ractional \textbf{Al}ternating optimized N-\textbf{C}ut~\cite{shi2000normalized} with multiple regularization technologies for \textbf{O}vercoming local mi\textbf{n} ima. This novel approach reimagines unsupervised image segmentation using tokens from pre-trained self-supervised transformers. Instead of relying on recursive binary partitioning through the Fiedler vector, Falcon introduces a regularized, parallelizable K-way Normalized Cut formulation that effectively addresses the three major limitations of existing graph-cut methods.

Falcon operates in two stages. First, our fast K-way Normalized Cut algorithm processes the attention maps from vision foundation models to generate semantically meaningful low-resolution patch-level segmentations. Unlike previous approaches that process each cut sequentially, our method optimizes all segments simultaneously, resulting in more coherent global segmentation. Second, we introduce a refinement stage that leverages complementary low-level information (RGB and/or depth ) to enhance the resolution and precision of the generated masks, producing high-quality pixel-level segmentations that better capture object boundaries and fine-grained details as shown in~\autoref{fig: Results figure}.

Our contributions can be summarized as follows.
\begin{enumerate}
    \item We analyze several deficiencies in recursive binary N-Cut~\cite{shi2000normalized} based on ViT~\cite{dosovitskiy2020image} tokens that lead to sub-optimal results: (a) distances between high-dimensional tokens become less discriminative (the so-called “curse of dimensionality”), causing similarity uniformity where true data structures become obscured; (b) spectral relaxation introduces a gap, creating risks of local sub-optimality during discretization; (c) Recursive binary partitioning is a greedy algorithm where each step employs a locally optimal strategy that cannot guarantee global optimality nor allow for subsequent refinement.
    \item We introduce a fractional quadratic optimization objective for K-way N-Cut~\cite{shi2000normalized} and develop an efficient alternating optimization algorithm with regularization techniques, enabling parallel optimization of multiple segments. This effectively avoids the sub-optimality common in recursive greedy algorithms and spectral clustering relaxation based on the Fiedler eigenvector method. Additionally, we incorporate a power transformation when computing the affinity matrix to mitigate similarity uniformity caused by the curse of dimensionality.
    \item In experiments on challenging benchmarks, Falcon consistently outperforms prior methods – for instance, it improves mean IoU by  4.3 percentage points on most challenge dataset  Cityscapes~\cite{cordts2016cityscapes}. At the same time, it reduces segmentation runtime by 30\% relative to the spectral clustering baseline, significantly boosting efficiency.
\end{enumerate}

\vspace{-3mm}
\section{Related Works}
\label{sec:related}

\textbf{Vision Foundation Models.}
Vision foundation models typically leverage unlabeled data to learn robust and generalizable representations. Early contrastive methods like MoCo~\cite{he2020momentum} and BYOL~\cite{grill2020bootstrap} laid the groundwork for advanced frameworks such as SwAV~\cite{goyal2021self}, DINO~\cite{caron2021emerging,oquab2023dinov2,darcet2023vitneedreg}, and iBOT~\cite{zhou2021ibot}, while masked autoencoders~\cite{he2022masked} have further refined reconstruction-based pre-training. Beyond purely visual approaches, multimodal pre-training has surged in prominence, with models like CLIP~\cite{radford2021learning}, BLIP~\cite{li2022blip}, and Siglip~\cite{zhai2023sigmoid,tschannen2025siglip} aligning high-level image features to text. In parallel, diffusion-based methods such as Stable Diffusion~\cite{gupta2024progressive} extend these capabilities by learning rich generative representations, enabling tasks ranging from zero-shot classification to semantic correspondence. Collectively, these developments highlight the efficacy of foundation models in scaling to large, diverse datasets and adapting readily to downstream tasks.

\noindent\textbf{Semantic Segmentation.}
Semantic segmentation partitions an image into semantically coherent regions by labeling each pixel, enabling the understanding of the structured scene. It is broadly categorized into supervised and unsupervised methods. Supervised segmentation, extensively studied and achieving high accuracy~\cite{namekata2024emerdiff, wang2021max, cheng2021per, ravi2024sam2}, relies on large-scale annotated datasets. Recent work has explored text-based supervision to mitigate the need for dense annotations~\cite{ranasinghe2023perceptual, xu2022groupvit, cha2023learning, ren2023viewco}. In contrast, unsupervised methods often require dataset-specific training to achieve competitive performance~\cite{liang2023open, feng2023network, cho2021picie, li2023acseg}, and zero-shot segmentation for unseen categories remains challenging. DiffSeg~\cite{tian2024diffuse} leverages self-attention maps from a pre-trained diffusion model, applying KL-divergence-based iterative merging for segmentation. DiffCut~\cite{couairon2025diffcut} improves upon this by extracting richer encoder features from the self-attention block of a Transformer and incorporating a recursive N-Cut~\cite{ncut2000} algorithm.

\noindent\textbf{Graph-based Segmentation.}
Early approaches, such as Normalized Cut~\cite{ncut2000}, optimized the segmentation problem through spectral graph theory. \textit{et al.}\cite{specclus2007} formalized spectral clustering theory based on the N-Cut objective, yet its computational limitations persisted. To improve efficiency and adaptability, \cite{efficient2004} proposed an adaptive merging strategy using intra-region and inter-region criteria, while \cite{ranwalk2006} introduced a probabilistic random walk framework that leverages adjacent pixel relationships to handle complex textures and weak boundaries. Recent deep learning-integrated approaches, such as TokenCut\cite{tokencut2022}, compute token-level similarities via self-supervised Transformer features but remain constrained by N-Cut’s recursive bisection strategy and hard segmentation constraints.

\vspace{-3mm}

\begin{figure*}[ht]
    \centering
    \includegraphics[width=0.92\textwidth]{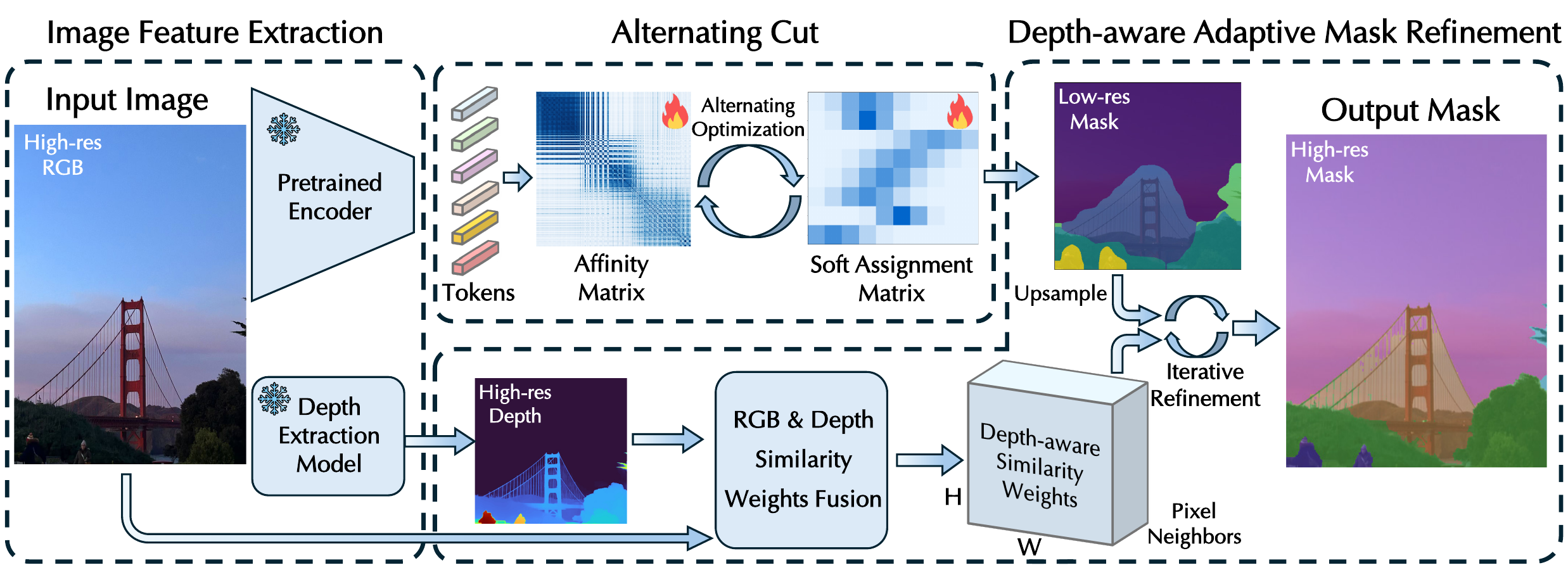}
    \caption{\textbf{Overview of Falcon.} (1) Image feature extraction: we extract tokens and depth map from the input image. (2) Alternating Cut: we construct affinity matrix between tokens and alternately optimize it with the soft assignment matrix. (3) Depth-aware Adaptive Mask Refinement: depth map and original RGB image flows into the similarity weights fusion module and produce the weights which is used to iterative refine the mask assignment obtained from Alternating Cut step.}
    \vspace{-2mm}
    \label{fig: Pipeline}
    \vspace{-3mm}
\end{figure*}

\section{Methodology}

\subsection{Revisit Recursive N-Cut on Token}
We revisit recursive Normalized Cut (N-Cut)~\cite{shi2000normalized} for token-based representations from high-dimensional embeddings (e.g., self-supervised transformers). Despite its success in graph-based segmentation, recursive N-Cut~\cite{shi2000normalized} struggles with (1) similarity concentration, (2) spectral relaxation gaps, and (3) sub-optimal recursive partitioning, degrading segmentation in complex, multi-scale images. We analyze these issues, highlighting the need for advanced methods.

\noindent\textbf{Similarity Concentration in High-Dimensional Spaces.}
When vectors reside in high-dimensional spaces, their pairwise cosine similarities tend to cluster tightly around zero~\cite{donoho2000high, vershynin2018high, Zhang_2019_CVPR}, a phenomenon commonly termed the “curse of dimensionality”~\cite{bellman2015applied, zimek2012survey}. As dimensionality grows, inter-point distances collapse into a narrow numerical range~\cite{beyer1999nearest}, giving rise to \emph{hubness}, where certain points (so-called “hubs”) become nearest neighbors to disproportionately many others~\cite{radovanovic2010hubs}. This uniformity in the pairwise affinity matrix obscures true clusters, diminishing the effectiveness of graph-based approaches like spectral clustering and normalized cut~\cite{radovanovic2010hubs}. Consequently, the interplay of cosine similarity concentration, measure concentration~\cite{ledoux2001concentration}, and hubness leads to blurred boundaries and weaker cluster structures in high-dimensional embedding spaces, creating fundamental barriers for downstream tasks and beyond.

\noindent\textbf{Gap in Spectral relaxation.}
The N-Cut~\cite{shi2000normalized} objective in graph-based segmentation seeks to partition a weighted graph into two disjoint sets by minimizing a normalized measure, a problem that is NP-hard under discrete constraints. To address this, the discrete indicator vector is relaxed to a continuous domain, enabling the problem to be reformulated as a Rayleigh quotient on the graph Laplacian whose solution is obtained via eigen-decomposition (typically using the second eigenvector). However, this continuous solution does not enforce the original binary constraints, necessitating thresholding or clustering that can introduce a relaxation gap between the continuous optimum and the discrete solution. See appendix for the mathematical details.

\noindent\textbf{Sub-optimality in Recursive Partitioning.}
Although recursive bipartitioning---where a two-way normalized cut is computed at each stage and then applied to each subgraph until \(K\) subsets are formed---is computationally appealing, it often leads to sub-optimal solutions. In particular, multi-way Ncut~\cite{shi2000normalized} does not possess an optimal substructure that guarantees local two-way cuts can be combined to produce a global optimum. Each bipart step, even if locally minimizing the normalized cut on a subgraph, permanently fixes a partition boundary that can deviate from the optimal multi-cut minimum and cannot be revoked in subsequent steps. Additionally, real-world graphs exhibit parallel communities rather than strictly nested ones, making layer-by-layer splitting mismatched to the data structure. Furthermore, relying on only the second smallest eigenvector of the graph Laplacian (the Fiedler vector) at each bipart step ignores additional eigenvectors that could reveal more nuanced multi-cluster separations. While recursive partitioning is easy to implement, it can incur larger overall cut values compared to algorithms that consider global \(K\)-way objectives or use multiple spectral components simultaneously.

\subsection{Segmentation as Graph Cut on Tokens}

Graph-based segmentation formulates image segmentation as a graph partitioning problem. Following the general pipeline of normalized cut-based methods~\cite{tokencut2022,wang2023cut,couairon2025diffcut}, we define an undirected graph \( G = (V, E) \) with \( N \) nodes, where each node corresponds to a \( d \)-dimensional patch token vector extracted from a vision Transformer. The node set is defined as:
\begin{equation}
    V = \{ \bm{f}_1, \bm{f}_2, \dots, \bm{f}_N \}, \quad \bm{f}_i \in \mathbb{R}^{d}.
\end{equation}
The edge set is constructed as:
\begin{equation}
    E = \{ (\bm{f}_i, \bm{f}_j) \mid i \neq j \}.
\end{equation}

To model the pairwise relationships between tokens, we define an affinity matrix \( \bm{W} \in \mathbb{R}^{N \times N} \) that encodes their similarities. The feature matrix is constructed as:
\begin{equation}
    \bm{F} = \begin{bmatrix} \bm{f}_1 & \bm{f}_2 & \cdots & \bm{f}_N \end{bmatrix}^T \in \mathbb{R}^{N \times d}.
\end{equation}
The raw affinity matrix is computed as:
\begin{equation}
    \bm{W}_{\text{raw}} = \bm{F} \bm{F}^T.
\end{equation}
To ensure numerical stability and normalize the values within \([0,1]\), we apply min-max normalization:
\begin{equation}
    \bm{W}_{\text{norm}} = \frac{\bm{W}_{\text{raw}} - \min(\bm{W}_{\text{raw}})}{\max(\bm{W}_{\text{raw}}) - \min(\bm{W}_{\text{raw}})}.
\end{equation}
We then apply power transformation and regularization to obtain the final affinity matrix:
\begin{equation}
    \bm{W} = \bm{W}_{\text{norm}}^\alpha + \lambda \operatorname{diag}(\bm{D}),
\end{equation}
where \( \lambda \) is a regularization coefficient, and the degree matrix is defined as:
\begin{equation}
    \bm{D} = \operatorname{diag}(d_1, d_2, \dots, d_N), \quad d_i = \sum_{j} {W}_{ij}.
    \vspace{-1.5mm}
\end{equation}
The power transformation technique was originally introduced in WGCNA~\cite{langfelder2008wgcna} for transcriptomics. More recently, it has been adopted in token-based graph cut methods~\cite{tokencut2022,couairon2025diffcut}. In Sec.~\ref{ssec:power_trans}, we analyze its effect from both theoretical and experimental perspectives.

Given this formulation, we aim to partition the graph into \( K \) disjoint subsets \( \{ P_1, P_2, \dots, P_K \} \), such that intra-partition similarity is maximized while inter-partition connectivity is minimized. This is formulated as a \( K \)-way Normalized Cut (N-Cut)~\cite{shi2000normalized} problem:
\vspace{-1.5mm}
\begin{equation}
    \operatorname{Ncut}(P_1,\dots,P_K) = \sum_{k=1}^{K} \frac{\operatorname{cut}(P_k, \bar{P}_k)}{\operatorname{vol}(P_k)}.
    \vspace{-1.5mm}
\end{equation}

The cut cost measures the total edge weight between nodes in \( P_k \) and those outside \( P_k \):
\vspace{-1.5mm}
\begin{equation}
    \operatorname{cut}(P_k, \bar{P}_k) = \sum_{i \in P_k, j \notin P_k} W_{ij}.
    \vspace{-1.5mm}
\end{equation}
The volume of a partition is defined as:
\vspace{-1.5mm}
\begin{equation}
    \operatorname{vol}(P_k) = \sum_{i \in P_k} d_i = \bm{x}_k^T \bm{D} \bm{x}_k,
    \vspace{-1.5mm}
\end{equation}
where \( \bm{x}_k \in \mathbb{R}^{N} \) is a partition indicator vector:
\begin{equation}
    (\bm{x}_k)_i =
    \begin{cases}
        1, & \text{if node } i \text{ belongs to partition } k, \\
        0, & \text{otherwise}.
    \end{cases}
\end{equation}

To express the cut cost in matrix form, we use the graph Laplacian \( \bm{L} = \bm{D} - \bm{W} \) and rewrite:
\begin{equation}
    \operatorname{cut}(P_k, \bar{P}_k) = \bm{x}_k^T \bm{L} \bm{x}_k.
\end{equation}
Thus, the $K$-way N-Cut objective becomes:
\vspace{-3mm}
\begin{equation}
    \operatorname{Ncut}(P_1,\dots,P_K) = \sum_{k=1}^{K} \frac{\bm{x}_k^T \bm{L} \bm{x}_k}{\bm{x}_k^T \bm{D} \bm{x}_k}.
    \label{eq:ncut_rqmax}
    \vspace{-1.5mm}
\end{equation}

Let \( \bm{X} = [\bm{x}_1, \bm{x}_2, \dots, \bm{x}_K] \in \mathbb{R}^{N \times K} \) be the partition assignment matrix, where each row sums to 1 in the relaxed case. The N-Cut problem is then rewritten as the equivalent Rayleigh quotient maximization (see Appendix~\ref{app:ncut}):
\vspace{-1.5mm}
\begin{equation}
    \max_{\bm{X}} \sum_{k=1}^{K} \frac{\bm{x}_k^T \bm{W} \bm{x}_k}{\bm{x}_k^T \bm{D} \bm{x}_k}.
    \vspace{-1.5mm}
\end{equation}
To prevent trivial solutions, we impose the constraint:
\vspace{-1.5mm}
\begin{equation}
    \bm{X}^\top \bm{D} \bm{X} = \bm{I}_K.
    \vspace{-1.5mm}
\end{equation}
This ensures that the solution maintains an orthogonality constraint, preventing partitions from collapsing into a degenerate solution.

\subsection{Fractional Alternating Cut with Regularization}

To efficiently solve the optimization problem, following the fractional programming thought in~\cite{NEURIPS2024_a3017a8d,9706351}, we introduce auxiliary variables \( y_k \) and apply a quadratic transform~\cite{NIPS2001_a0128693} (see Appendix~\ref{app:frac_transform}), leading to the reformulated optimization objective from Eq.\ref{eq:ncut_rqmax}:
\vspace{-3mm}
\begin{equation}
    \max_{X, y} \sum_{k=1}^{K} \left( 2y_k \sqrt{\bm{x}_k^T \bm{W} \bm{x}_k} - y_k^2 \bm{x}_k^T \bm{D} \bm{x}_k \right).
    \vspace{-3mm}
    \label{eq:ncut_qt}
\end{equation}

Here, the quadratic transform eliminates the fractional structure in the original objective, making the optimization more tractable. Instead of directly optimizing the Eq.\ref{eq:ncut_rqmax}, we alternately update \( \bm{X} \) and \( y_k \) in Eq.\ref{eq:ncut_qt}, ensuring a smooth and efficient optimization process.

By taking the derivative of the objective function Eq.\ref{eq:ncut_qt} with respect to \( y_k \) and setting it to zero, the optimal closed-form solution for \( y_k \) is obtained as:
\vspace{-1.5mm}
\begin{equation} \label{eq:yk-opt}
    y_k = \sqrt{\frac{\bm{x}_k^T \bm{W} \bm{x}_k}{\bm{x}_k^T \bm{D} \bm{x}_k}}.
    \vspace{-1.5mm}
\end{equation}

Starting with an initial random soft assignment matrix \( \bm{X} \), the optimization proceeds by alternately updating \( y_k \), refining the partition assignments, and adjusting the affinity matrix until convergence. Given \( y_k \), we update the assignment matrix \( \bm{X} \) as follows:
\vspace{-1.5mm}
\begin{equation}
    X_{ik}^{\text{new}} = \operatorname{Softmax}_k \left( \frac{\sum_{j} {W}_{ij} {X}^{\text{old}}_{jk}}{\sum_j {X}^{\text{old}}_{jk} {D}_{jj}} y_k \right).
    \vspace{-1.5mm}
\end{equation}
This update encourages each node \( i \) to adjust its assignment probability towards partitions with stronger affinities, weighted by the auxiliary variable \( y_k \), which reflects the relative importance of each cluster.

To further refine the segmentation, the affinity matrix \( \bm{W} \) is dynamically adjusted using cosine similarity to better capture the underlying data structure (see Appendix~\ref{app:graph_update}):
\begin{equation}
    {W}_{ij}^{\text{new}} = {W}_{ij}^{\text{old}} \cdot \exp\left[-\frac{(1 - \cos_{ij})^2}{\beta} \right],
    \vspace{-1.5mm}
\end{equation}
where \( \cos_{ij} = \frac{\langle \bm{X}_i, \bm{X}_j \rangle}{\|\bm{X}_i\| \|\bm{X}_j\|} \). This adjustment enhances the separation between clusters by penalizing edges connecting dissimilar nodes while reinforcing intra-cluster connections. Once \( \bm{W}^{\text{new}} \) is updated, the algorithm iterates back to the first step, recomputing \( y_k \) and continuing the alternating process. The iterative updates of \( y_k \) and \( \bm{X} \) allow the optimization to progressively refine both the segmentation and the underlying graph representation, ensuring convergence to a stable solution.

\subsection{Segmentation Mask Generation}

Given the final soft assignment matrix \( \bm{X} \), we first compute an initial segmentation mask in low resolution by assigning each node to the partition with the highest probability:
\vspace{-1mm}
\begin{equation}
    \operatorname{mask}_{\text{low}}(i) = \arg\max_{k} X_{ik}.
    \vspace{-3mm}
\end{equation}
Since the segmentation mask is computed on a coarse resolution corresponding to the tokenized representation of the input, it is necessary to upsample the mask to the original image resolution. We achieve this by applying nearest-neighbor interpolation to obtain $\operatorname{mask}_{\text{high}}$. Once the high-resolution mask is obtained, we compute a partition-wise feature representation to refine the segmentation. Specifically, for each partition \( k \), the feature center is obtained by averaging the feature vectors of all pixels belonging to that partition. Using the upsampled feature map \( Z \), the feature center of partition \( k \) is computed as:
\vspace{-3mm}
\begin{equation}
    C_k = \frac{\sum_{h,w} M_{k,h,w} \cdot Z_{:,h,w}}{\sum_{h,w} M_{k,h,w}},
    \vspace{-1.5mm}
\end{equation}
where \( M_{k,h,w} \) is the one-hot encoded mask that indicates whether pixel \( (h,w) \) belongs to partition \( k \). This ensures that \( C_k \) represents the average feature vector of all pixels assigned to partition \( k \), capturing the characteristic features of that partition.

With the partition feature centers computed, the final segmentation mask is refined by reassigning each pixel based on its similarity to the partition embeddings. The similarity between the feature vector at pixel \( (h,w) \) and each partition center \( C_k \) is measured by the dot product:
\begin{equation}
    S_{h,w,k} = Z_{:,h,w}^T C_k.
\end{equation}
Each pixel is assigned to the highest similarity partition:
\vspace{-1mm}
\begin{equation}
    \operatorname{mask}_{\text{final}}(h,w) = \arg\max_{k} S_{h,w,k}.
    \vspace{-3mm}
\end{equation}

By leveraging both the initial token-level assignment and the refined partition-wise feature representation, this approach ensures a segmentation mask that aligns well with the image structure while preserving consistency in local feature distributions.

\begin{figure}[t]
    \centering
    \includegraphics[width=0.38\textwidth]{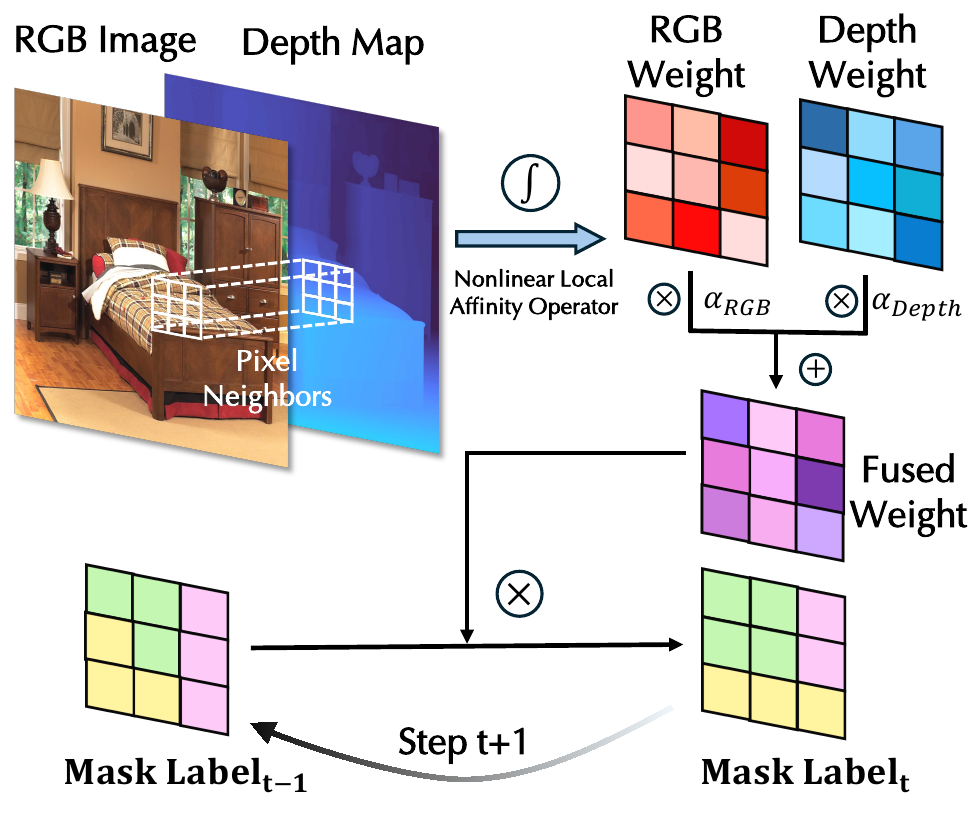}
    \vspace{-2mm}
    \caption{\textbf{Depth-aware Non-linear Adaptive Mask Refinement (DREAM).} RGB and depth Similarity weight matrix are constructed based on the affinity between current pixel and its neighbors, and are fused through the blending weights. Then the mask label iteratively updates based on the fused weights.}
    \label{fig: DREAM figure}
    \vspace{-2mm}
\end{figure}

\subsection{Depth-aware Non-linear Adaptive Mask Refinement (DREAM)}

Unlike previous works that refine segmentation masks by propagating information solely in the RGB domain using PAMR~\cite{Araslanov:2020:SSS}, we propose \textit{\textbf{D}epth-awa\textbf{r}e Non-lin\textbf{e}ar \textbf{A}daptive \textbf{M}ask Refinement (DREAM)}, as shown in \autoref{fig: DREAM figure}. This post-processing technique refines segmentation masks by leveraging local non-linear feature weighting in both RGB and depth modalities. \textit{DREAM} can iteratively refine the mask by computing local weighting measures and propagating segmentation labels in a depth-aware manner.

To compute these local weighting measures, we first define a non-linear operator based on feature dissimilarities. Given a feature map \( \phi \in \mathbb{R}^{B \times C \times H \times W} \), the measure is computed over an 8-connected neighborhood. Instead of using a simple linear difference, we apply a non-linear transformation using the Exponential Linear Unit (ELU):
\vspace{-1.5mm}
\begin{equation}
    \Omega(\phi) = \sum_{(i,j) \in \mathcal{N}} \bigl(\phi_{i,j} - \phi_{c}\bigr)
                   + \lambda \,\operatorname{ELU} \!\bigl(\phi_{i,j} - \phi_{c}\bigr).
                   \vspace{-1.5mm}
\end{equation}

Here $\phi_{c}$ is the center pixel, \( \mathcal{N} \) denotes the eight neighboring pixels of a given center pixel, and \( \lambda \) controls the degree of nonlinearity. The ELU transformation ensures that small differences are preserved while amplifying significant variations, making the weighting measure more robust to local feature inconsistencies.

In addition to this non-linear measure, we compute the local standard deviation over a \( 3 \times 3 \) neighborhood to quantify spatial variations in feature maps:

\vspace{-1.5mm}
\begin{equation}
    \sigma(\phi) = \sqrt{\frac{\sum\limits_{(i,j) \in \mathcal{N}} \bigl(\phi_{i,j} - \bar{\phi}\bigr)^2}{|\mathcal{N}|} 
    },
    \vspace{-1.5mm}
\end{equation}

where \( \bar{\phi} \) is the mean feature value over \( \mathcal{N} \). The standard deviation quantifies local texture variability, which helps refine segmentation masks in regions with strong structural changes by preventing excessive propagation.

To balance contributions from different modalities, we scale \(\Omega(\phi)\) via depth-aware standardization:
\vspace{-1.5mm}
\begin{equation}
    \Omega_{\text{std}}(\phi) = -\frac{\Omega(\phi)}{\eta \cdot \sigma(\phi)},
    \vspace{-3mm}
\end{equation}
where \( \eta \) can control the impact of local standard deviation, ensuring the weighting measure remains well-scaled across different feature distributions. Given an input RGB feature map \( \phi_{\text{rgb}} \in \mathbb{R}^{B \times 3 \times H \times W} \) and its corresponding depth feature map \( \phi_{\text{depth}} \in \mathbb{R}^{B \times 1 \times H \times W} \), we compute the non-linear local weighting measure separately for each modality as $\Omega_{\text{rgb}} = \Omega_{\text{std}}\bigl(\phi_{\text{rgb}}\bigr)$ and $\Omega_{\text{depth}} = \Omega_{\text{std}}\bigl(\phi_{\text{depth}}\bigr)$ respectively. The final combined measure is obtained by merging the RGB- and depth-based components:

\vspace{-1.5mm}
\begin{equation}
    \Omega = \alpha_{\text{rgb}}\,\Omega_{\text{rgb}} \;+\; \alpha_{\text{depth}}\,\Omega_{\text{depth}}.
\end{equation}

This formulation allows the refinement process to incorporate both color and geometric cues, making segmentation masks more robust to ambiguous textures and depth discontinuities. Starting with an initial mask \( M \), the refined mask is computed iteratively using the merged measure:
\vspace{-1.5mm}
\begin{equation}
    M^{(t+1)} = \sum_{(i,j) \in \mathcal{N}} M^{(t)}_{i,j} \cdot \Omega_{i,j}.
    \vspace{-3mm}
\end{equation}

At each iteration \( t \), the mask values are updated by diffusing information based on the local weighting measure, encouraging smooth and structure-aware refinements. This iterative process continues until convergence, ensuring that segmentation masks align more accurately with underlying image structures.

Our overall pipeline is shown in \autoref{fig: Pipeline}.
\vspace{-3mm}
\section{Experiments}
\label{sec:exp}

\begin{table*}[t]
    \centering
    \begin{tabular}{lcccccc}
        \toprule      
        \textbf{Model} & \textbf{VOC} & \textbf{Context} & \textbf{COCO-Object} & \textbf{COCO-Stuff-27} & \textbf{Cityscapes} & \textbf{ADE20K} \\
        \midrule
        \textbf{MaskCLIP~\cite{dong2023maskclip}} &38.8&23.6&20.6&19.6&10.0&9.8 \\
        \midrule
        \textbf{MaskCut~\cite{wang2023cut}} & 53.8 & 43.4&30.1&41.7&18.7&35.7  \\
        \textbf{DiffSeg~\cite{tian2024diffuse}} & 49.8 & 48.8 & 23.2 & 44.2 & 16.8 & 37.7 \\
        \textbf{DiffCut~\cite{couairon2025diffcut}} & 65.2 & 56.5 & 34.1 & 49.1 & 30.6 & 44.3 \\
        \midrule
        \textbf{Falcon (ours)} & \textbf{66.6} & \textbf{57.8} & \textbf{35.8} & \textbf{52.6$^{*}$} & \textbf{34.9} & \textbf{47.1}\\
        \bottomrule
    \end{tabular}
    \caption{\textbf{Comparison of unsupervised segmentation methods across benchmarks.} Our Falcon achieves the highest mIoU on all datasets. Note that MaskCLIP~\cite{dong2023maskclip} requires text input to guide segmentation. The notation $^{*}$ indicates no mask refinement.}
    \label{tab:results}
    \vspace{-3mm}
\end{table*}

\begin{table*}[h]
  \centering
  \begin{tabular}{@{}llccccc@{}}
    \toprule
    \multicolumn{2}{@{}c}{\textbf{Settings}}& \textbf{Context} & \textbf{COCO-Object} & \textbf{COCO-Stuff} & \textbf{Cityscapes} & \textbf{ADE20K} \\
    \midrule
    \multicolumn{2}{@{}c}{\textbf{Our Base}} & 51.2 & 33.0 & 49.5 & 31.3 & 40.5   \\
    \multicolumn{2}{@{}c}{\textbf{+ Soft Assignment}} & 53.1 & 32.8 & 52.7 & 32.1 &  40.0  \\
    \multicolumn{2}{@{}c}{\textbf{+ Dynamic Affinity Regularization}} & 55.3 & 33.3 & \textbf{52.6} & 32.2 &  42.3  \\
    \midrule
    \multirow{3}{*}{\textbf{+ Mask Refinement}} & \textbf{PAMR} & 56.6 & 34.7 & 50.6 & 32.1 & 46.3 \\
    & \textbf{DREAM (RGB)} & 57.0 & 35.5 & 51.6 & 34.2 & 46.4 \\
    & \textbf{DREAM (RGBD)} & \textbf{57.8} & \textbf{35.8} & 51.8 & \textbf{34.9} & \textbf{47.1} \\
    \bottomrule
  \end{tabular}
  \caption{\textbf{Ablation study on mIoU across datasets.} We evaluate the impact of incrementally adding Soft Assignment, Dynamic Affinity Regularization, and Mask Refinement. The last stage compares three refinement methods: PAMR, DREAM (RGB), and DREAM (RGBD).}
  \label{tab:ablation_miou_cumulative}
  \vspace{-3mm}
\end{table*}

\textbf{Implementation details.}
Following DiffCut, we build on a distilled Stable Diffusion model~\cite{gupta2024progressive} (without text prompts, features extracted at $t=10$) and integrate Depth-Pro~\cite{bochkovskii2024depth} for depth extraction. Our pipeline performs feature normalization ($\ell_{2}$ norm) at a lower resolution $32\times 32$ (also the initial number of partitions) and then applies Falcon to generate an initial segmentation. We set the power transformation parameter to $4.5$ (except ADE20K, where $\alpha = 5.5$). Once low-resolution clusters are obtained, we upsample these segment labels to a final mask size of $128\times128$, using the corresponding upsampled features for refinement. For certain experiments, we optionally employ DREAM to further clean up boundaries and integrate depth information. All experiments are conducted in PyTorch, using a single NVIDIA RTX 4090 GPU. Our method supports a range of common datasets by unifying them under the same segmentation pipeline. Resizing each image to $1024\times 1024$ and performing both clustering and optional refinement steps take around $0.6$ seconds per image in practice.

\noindent\textbf{Datasets.}
We evaluate Falcon on six widely used benchmark datasets to ensure a fair comparison. Pascal VOC~\cite{everingham2015pascal} consists of 20 foreground object classes and is a standard benchmark for object detection and segmentation. Pascal Context~\cite{mottaghi2014role}, an extension of Pascal VOC, expands the dataset to 59 foreground classes with additional contextual elements. COCO-Object~\cite{lin2014microsoft} is a subset of the COCO dataset containing 80 distinct object categories, while COCO-Stuff-27 consolidates 80 “thing” categories and 91 “stuff” categories into 27 mid-level semantic classes. Cityscapes~\cite{cordts2016cityscapes} focuses on semantic segmentation of urban street scenes and includes 27 foreground classes. ADE20K~\cite{zhou2019semantic} is a large-scale scene parsing dataset with 150 foreground classes, covering a broad spectrum of objects and environments.

\noindent\textbf{Metrics.}
For evaluating segmentation performance, we adopt the mean intersection over union (mIoU) as our key metric. Since our approach does not directly provide semantic labels, we rely on the Hungarian matching algorithm ~\cite{kuhn1955hungarian} to establish an optimal correspondence between predicted masks and ground truth masks. This technique ensures precise alignment despite the absence of explicit label assignments. When working with datasets with a background class, we apply a many-to-one matching approach, enabling multiple predicted masks to link to a single background label effectively.

\subsection{Main Results}

\quad As shown in \autoref{tab:results}, we evaluate Falcon on six widely recognized datasets: Pascal VOC~\cite{everingham2015pascal}, Pascal Context~\cite{mottaghi2014role}, COCO-Object~\cite{lin2014microsoft}, COCO-Stuff-27, Cityscapes~\cite{cordts2016cityscapes}, and ADE20K~\cite{everingham2015pascal}. As shown in Table~\ref{tab:results}, Falcon achieves SOTA performance, surpassing the strongest baseline, DiffCut~\cite{couairon2025diffcut}, by +3.5 mIoU on COCO-Stuff-27, +4.3 on Cityscapes, +2.8 on ADE20K, and +1.4 on Pascal VOC. Compared to earlier approaches like MaskCut~\cite{wang2023cut} and DiffSeg~\cite{tian2024diffuse}, Falcon further widens the performance gap, presenting its robustness across diverse segmentation tasks.

\begin{figure}[t]
    \centering
    \includegraphics[width=0.35\textwidth]{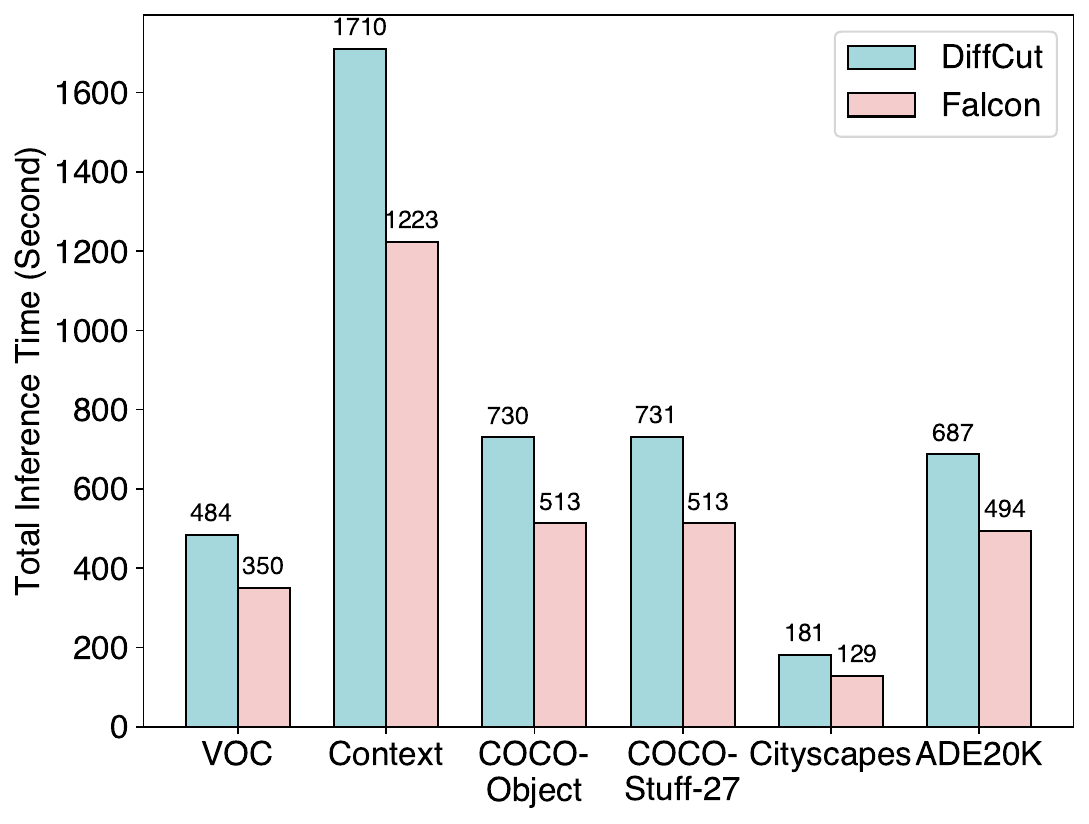}
    \vspace{-3mm}\caption{\textbf{The comparison of total evaluation time on various datasets.}  Falcon can shorten about 30\% inference time than recursive N-Cut on a single RTX4090.}
    \label{fig: eval_time}
    \vspace{-4mm}
\end{figure}

Beyond accuracy, as shown in \autoref{fig: eval_time}, Falcon significantly improves efficiency. By replacing recursive cuts in spectral clustering with parallel K-way optimization and refining segmentation through power-transformed affinity matrices and multi-modal mask refinement, Falcon achieves a 30\% faster than spectral clustering baselines on a single RTX4090 GPU. This combination of precision and speed makes Falcon well-suited for real-world applications, setting a new benchmark in unsupervised segmentation.

\subsection{Ablation Studies}

\quad In this part, we conduct a series of ablation studies to evaluate the individual and cumulative contributions of the key components in our Falcon framework: Power Transformation, Soft Assignment, Dynamic Affinity Matrix Regularization (DAMR), and Mask Refinement.

\noindent\textbf{Power Transformation in Affinity Matrix.}
\label{ssec:power_trans}
In high-dimensional embedding spaces, graph-based methods often struggle with \emph{near-uniform} affinity matrices, wherein even weakly related points exhibit artificially elevated similarity scores. This phenomenon, commonly referred to as \emph{similarity collapse}, masks genuine cluster boundaries and destabilizes spectral partitioning. By re-scaling pairwise similarities nonlinearly, the power transformation technique selectively amplifies stronger affinities while further suppressing weaker ones, thereby \emph{magnifying} the separation between distinct clusters. This enhanced contrast clarifies the spectral embedding and mitigates noise sensitivity, ultimately yielding more stable partitioning.
As in \autoref{fig: alpha vs miou}, our experiments on the ADE20K~\cite{everingham2015pascal} dataset show that tuning the power parameter $\alpha$ leads to marked improvements in mean Intersection-over-Union (mIoU), reflecting the practical benefits of this transformation in semantic segmentation. This aligns with the theoretical perspective that better-defined affinities foster sharper spectral distinctions, enabling leading eigenvectors to capture the intrinsic structure of the data more faithfully.

\noindent\textbf{Effects of Soft Assignment and Dynamic Affinity Matrix Regularization.} \autoref{tab:ablation_miou_cumulative} demonstrates that Soft Assignment and Dynamic Affinity Matrix Regularization markedly enhance Falcon's segmentation efficacy. Soft Assignment reformulates optimization to enable parallel cluster processing, reducing local sub-optimality and improving complex scene segmentation. Additionally, dynamic affinity matrix regularization refines the affinity structure to mitigate similarity collapse, thereby enhancing performance on complex datasets. Combined, these strategies overcome the inherent limitations of graph-cut methods, enabling globally coherent segmentations.

\noindent\textbf{Effects of Different Mask Refinements.}
We further investigate the impact of mask refinement within the Falcon pipeline by comparing three strategies: PAMR~\cite{Araslanov:2020:SSS}, DREAM (RGB), and DREAM (RGBD). As shown in \autoref{tab:ablation_miou_cumulative}, each method is applied after Soft Assignment and Dynamic Affinity Regularization, leveraging low-level cues (e.g., RGB or depth) to enhance coarse patch-level segmentations into higher-fidelity pixel-level masks. Across the benchmark datasets (except COCO-Stuff), all three refinements consistently improve mIoU over the pipeline without refinement. In particular, \textbf{PAMR}\cite{Araslanov:2020:SSS}demonstrates appreciable gains by sharpening object boundaries, while \textbf{DREAM (RGB)} refines segmentation with advanced RGB-based cues, outperforming PAMR on most datasets. Finally, \textbf{DREAM (RGBD)} achieves the strongest overall performance by incorporating both RGB and depth information, excelling in delineating complex shapes and intricate scene details. However, on COCO-Stuff, these refinements can inadvertently degrade performance, potentially due to extensive homogeneous background regions, numerous visually similar background categories, and complex object-background boundaries. These findings underscore the benefits of multi-modal integration for unsupervised segmentation, with DREAM (RGBD) delivering the most pronounced performance gains among all refinement strategies.
\vspace{-3mm}

\begin{figure}[t]
    \centering
    \includegraphics[width=0.4\textwidth]{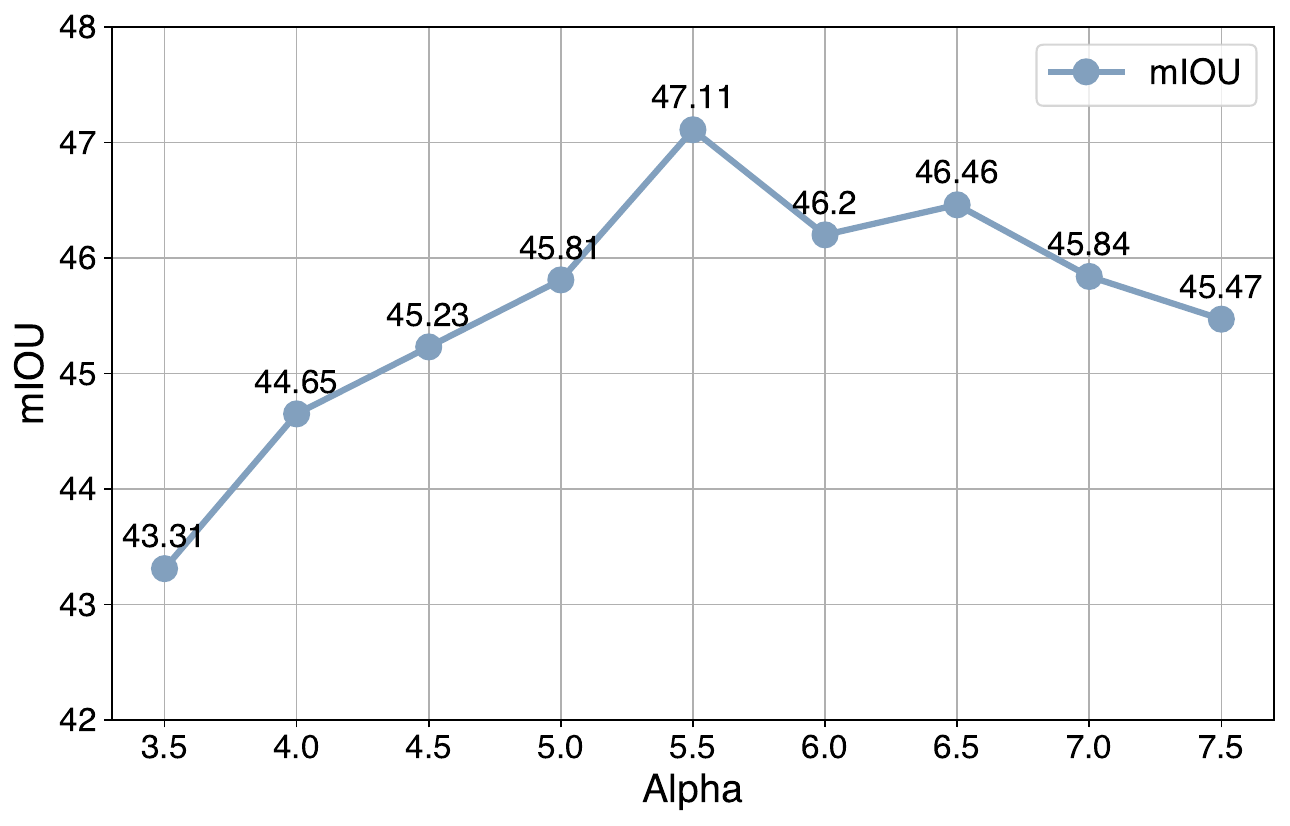}
    \vspace{-3mm}\caption{\textbf{Performance on ADE20k benchmark with different power transformation values of $\alpha$}. The mIoU achieves $47.1$ when $\alpha=5.5$.}
    \label{fig: alpha vs miou}
    \vspace{-5mm}
\end{figure}

\section{Conclusion}
\label{sec:conclusion}
\vspace{-1mm}
In this paper, we tackle limitations of recursive  N-Cut methods in graph-based unsupervised image segmentation, including suboptimal greedy partitions, susceptibility to local minima, and high computational costs. We introduce Falcon, a novel framework that leverages a regularized, parallel K-way Normalized Cut formulation, followed by refinement using low-level features. Experiments on benchmarks show that Falcon achieves state-of-the-art segmentation performance while reducing runtime by about 30\% compared to recursive graph cut baselines. These advancements underscore Falcon's scalability and efficiency, positioning it as a practical solution for real-world applications like image editing, autonomous driving, and medical imaging, without reliance on manual annotations. Future research will explore integrating multi-modal data and scaling to larger segmentation tasks to further enhance its capabilities.

\clearpage

\clearpage

{
    \small
    \bibliographystyle{ieeenat_fullname}
    \bibliography{main}

\begin{thebibliography}{78}
\providecommand{\natexlab}[1]{#1}
\providecommand{\url}[1]{\texttt{#1}}
\expandafter\ifx\csname urlstyle\endcsname\relax
  \providecommand{\doi}[1]{doi: #1}\else
  \providecommand{\doi}{doi: \begingroup \urlstyle{rm}\Url}\fi

\bibitem[Araslanov and Roth(2020)]{Araslanov:2020:SSS}
Nikita Araslanov and Stefan Roth.
\newblock Single-stage semantic segmentation from image labels.
\newblock In \emph{IEEE/CVF Conference on Computer Vision and Pattern Recognition (CVPR)}, pages 4253--4262, 2020.

\bibitem[Bellman and Dreyfus(2015)]{bellman2015applied}
Richard~E Bellman and Stuart~E Dreyfus.
\newblock \emph{Applied dynamic programming}.
\newblock Princeton university press, 2015.

\bibitem[Beyer et~al.(1999)Beyer, Goldstein, Ramakrishnan, and Shaft]{beyer1999nearest}
Kevin Beyer, Jonathan Goldstein, Raghu Ramakrishnan, and Uri Shaft.
\newblock When is “nearest neighbor” meaningful?
\newblock In \emph{Database Theory—ICDT’99: 7th International Conference Jerusalem, Israel, January 10--12, 1999 Proceedings 7}, pages 217--235. Springer, 1999.

\bibitem[Bochkovskii et~al.(2024)Bochkovskii, Delaunoy, Germain, Santos, Zhou, Richter, and Koltun]{bochkovskii2024depth}
Aleksei Bochkovskii, Ama{\~A}{\c{G}}l Delaunoy, Hugo Germain, Marcel Santos, Yichao Zhou, Stephan~R Richter, and Vladlen Koltun.
\newblock Depth pro: Sharp monocular metric depth in less than a second.
\newblock \emph{arXiv preprint arXiv:2410.02073}, 2024.

\bibitem[Caron et~al.(2021)Caron, Touvron, Misra, J{\'e}gou, Mairal, Bojanowski, and Joulin]{caron2021emerging}
Mathilde Caron, Hugo Touvron, Ishan Misra, Herv{\'e} J{\'e}gou, Julien Mairal, Piotr Bojanowski, and Armand Joulin.
\newblock Emerging properties in self-supervised vision transformers.
\newblock In \emph{Proceedings of the IEEE/CVF international conference on computer vision}, pages 9650--9660, 2021.

\bibitem[Cha et~al.(2023)Cha, Mun, and Roh]{cha2023learning}
Junbum Cha, Jonghwan Mun, and Byungseok Roh.
\newblock Learning to generate text-grounded mask for open-world semantic segmentation from only image-text pairs.
\newblock In \emph{Proceedings of the IEEE/CVF Conference on Computer Vision and Pattern Recognition}, pages 11165--11174, 2023.

\bibitem[Chen et~al.(2017)Chen, Papandreou, Kokkinos, Murphy, and Yuille]{chen2017deeplab}
Liang-Chieh Chen, George Papandreou, Iasonas Kokkinos, Kevin Murphy, and Alan~L Yuille.
\newblock Deeplab: Semantic image segmentation with deep convolutional nets, atrous convolution, and fully connected crfs.
\newblock \emph{IEEE transactions on pattern analysis and machine intelligence}, 40\penalty0 (4):\penalty0 834--848, 2017.

\bibitem[Chen et~al.(2022)Chen, Xiao, Nie, and Huang]{9706351}
Xiaojun Chen, Zhicong Xiao, Feiping Nie, and Joshua~Zhexue Huang.
\newblock Finc: An efficient and effective optimization method for normalized cut.
\newblock \emph{IEEE Transactions on Pattern Analysis and Machine Intelligence}, pages 1--1, 2022.

\bibitem[Chen et~al.(2024)Chen, Huang, Zhao, and Shen]{NEURIPS2024_a3017a8d}
Yannan Chen, Beichen Huang, Licheng Zhao, and Kaiming Shen.
\newblock Multidimensional fractional programming for normalized cuts.
\newblock In \emph{Advances in Neural Information Processing Systems}, pages 89563--89583. Curran Associates, Inc., 2024.

\bibitem[Cheng et~al.(2021)Cheng, Schwing, and Kirillov]{cheng2021per}
Bowen Cheng, Alex Schwing, and Alexander Kirillov.
\newblock Per-pixel classification is not all you need for semantic segmentation.
\newblock \emph{Advances in neural information processing systems}, 34:\penalty0 17864--17875, 2021.

\bibitem[Cheng et~al.(2022)Cheng, Misra, Schwing, Kirillov, and Girdhar]{cheng2022masked}
Bowen Cheng, Ishan Misra, Alexander~G Schwing, Alexander Kirillov, and Rohit Girdhar.
\newblock Masked-attention mask transformer for universal image segmentation.
\newblock In \emph{Proceedings of the IEEE/CVF conference on computer vision and pattern recognition}, pages 1290--1299, 2022.

\bibitem[Cho et~al.(2021)Cho, Mall, Bala, and Hariharan]{cho2021picie}
Jang~Hyun Cho, Utkarsh Mall, Kavita Bala, and Bharath Hariharan.
\newblock Picie: Unsupervised semantic segmentation using invariance and equivariance in clustering.
\newblock In \emph{Proceedings of the IEEE/CVF conference on computer vision and pattern recognition}, pages 16794--16804, 2021.

\bibitem[Cordts et~al.(2016)Cordts, Omran, Ramos, Rehfeld, Enzweiler, Benenson, Franke, Roth, and Schiele]{cordts2016cityscapes}
Marius Cordts, Mohamed Omran, Sebastian Ramos, Timo Rehfeld, Markus Enzweiler, Rodrigo Benenson, Uwe Franke, Stefan Roth, and Bernt Schiele.
\newblock The cityscapes dataset for semantic urban scene understanding.
\newblock In \emph{Proceedings of the IEEE conference on computer vision and pattern recognition}, pages 3213--3223, 2016.

\bibitem[Couairon et~al.(2025)Couairon, Shukor, Haugeard, Cord, and Thome]{couairon2025diffcut}
Paul Couairon, Mustafa Shukor, Jean-Emmanuel Haugeard, Matthieu Cord, and Nicolas Thome.
\newblock Diffcut: Catalyzing zero-shot semantic segmentation with diffusion features and recursive normalized cut.
\newblock \emph{Advances in Neural Information Processing Systems}, 37:\penalty0 13548--13578, 2025.

\bibitem[Darcet et~al.(2023)Darcet, Oquab, Mairal, and Bojanowski]{darcet2023vitneedreg}
Timothée Darcet, Maxime Oquab, Julien Mairal, and Piotr Bojanowski.
\newblock Vision transformers need registers, 2023.

\bibitem[Dong et~al.(2023)Dong, Bao, Zheng, Zhang, Chen, Yang, Zeng, Zhang, Yuan, Chen, et~al.]{dong2023maskclip}
Xiaoyi Dong, Jianmin Bao, Yinglin Zheng, Ting Zhang, Dongdong Chen, Hao Yang, Ming Zeng, Weiming Zhang, Lu Yuan, Dong Chen, et~al.
\newblock Maskclip: Masked self-distillation advances contrastive language-image pretraining.
\newblock In \emph{Proceedings of the IEEE/CVF Conference on Computer Vision and Pattern Recognition}, pages 10995--11005, 2023.

\bibitem[Donoho et~al.(2000)]{donoho2000high}
David~L Donoho et~al.
\newblock High-dimensional data analysis: The curses and blessings of dimensionality.
\newblock \emph{AMS math challenges lecture}, 1\penalty0 (2000):\penalty0 32, 2000.

\bibitem[Dosovitskiy et~al.(2020)Dosovitskiy, Beyer, Kolesnikov, Weissenborn, Zhai, Unterthiner, Dehghani, Minderer, Heigold, Gelly, et~al.]{dosovitskiy2020image}
Alexey Dosovitskiy, Lucas Beyer, Alexander Kolesnikov, Dirk Weissenborn, Xiaohua Zhai, Thomas Unterthiner, Mostafa Dehghani, Matthias Minderer, Georg Heigold, Sylvain Gelly, et~al.
\newblock An image is worth 16x16 words: Transformers for image recognition at scale.
\newblock \emph{arXiv preprint arXiv:2010.11929}, 2020.

\bibitem[Esser et~al.(2023)Esser, Chiu, Atighehchian, Granskog, and Germanidis]{esser2023structure}
Patrick Esser, Johnathan Chiu, Parmida Atighehchian, Jonathan Granskog, and Anastasis Germanidis.
\newblock Structure and content-guided video synthesis with diffusion models.
\newblock In \emph{Proceedings of the IEEE/CVF international conference on computer vision}, pages 7346--7356, 2023.

\bibitem[Everingham et~al.(2015)Everingham, Eslami, Van~Gool, Williams, Winn, and Zisserman]{everingham2015pascal}
Mark Everingham, SM~Ali Eslami, Luc Van~Gool, Christopher~KI Williams, John Winn, and Andrew Zisserman.
\newblock The pascal visual object classes challenge: A retrospective.
\newblock \emph{International journal of computer vision}, 111:\penalty0 98--136, 2015.

\bibitem[F and P(2004)]{efficient2004}
Felzenszwalb~P F and Huttenlocher~D P.
\newblock Efficient graph-based image segmentation.
\newblock \emph{International Journal of Computer Vision}, 2004.

\bibitem[Feng et~al.(2023)Feng, Gadde, Liao, Ramon, and Martinez]{feng2023network}
Qianli Feng, Raghudeep Gadde, Wentong Liao, Eduard Ramon, and Aleix Martinez.
\newblock Network-free, unsupervised semantic segmentation with synthetic images.
\newblock In \emph{Proceedings of the IEEE/CVF Conference on Computer Vision and Pattern Recognition}, pages 23602--23610, 2023.

\bibitem[Goyal et~al.(2021)Goyal, Caron, Lefaudeux, Xu, Wang, Pai, Singh, Liptchinsky, Misra, Joulin, et~al.]{goyal2021self}
Priya Goyal, Mathilde Caron, Benjamin Lefaudeux, Min Xu, Pengchao Wang, Vivek Pai, Mannat Singh, Vitaliy Liptchinsky, Ishan Misra, Armand Joulin, et~al.
\newblock Self-supervised pretraining of visual features in the wild.
\newblock \emph{arXiv preprint arXiv:2103.01988}, 2021.

\bibitem[Grady(2006)]{ranwalk2006}
L. Grady.
\newblock Random walks for image segmentation.
\newblock \emph{IEEE Transactions on Pattern Analysis and Machine Intelligence}, 2006.

\bibitem[Grill et~al.(2020)Grill, Strub, Altch{\'e}, Tallec, Richemond, Buchatskaya, Doersch, Avila~Pires, Guo, Gheshlaghi~Azar, et~al.]{grill2020bootstrap}
Jean-Bastien Grill, Florian Strub, Florent Altch{\'e}, Corentin Tallec, Pierre Richemond, Elena Buchatskaya, Carl Doersch, Bernardo Avila~Pires, Zhaohan Guo, Mohammad Gheshlaghi~Azar, et~al.
\newblock Bootstrap your own latent-a new approach to self-supervised learning.
\newblock \emph{Advances in neural information processing systems}, 33:\penalty0 21271--21284, 2020.

\bibitem[Gupta et~al.(2024)Gupta, Jaddipal, Prabhala, Paul, and Von~Platen]{gupta2024progressive}
Yatharth Gupta, Vishnu~V Jaddipal, Harish Prabhala, Sayak Paul, and Patrick Von~Platen.
\newblock Progressive knowledge distillation of stable diffusion xl using layer level loss.
\newblock \emph{arXiv preprint arXiv:2401.02677}, 2024.

\bibitem[Hamilton et~al.(2022)Hamilton, Zhang, Hariharan, Snavely, and Freeman]{hamilton2022unsupervised}
Mark Hamilton, Zhoutong Zhang, Bharath Hariharan, Noah Snavely, and William~T Freeman.
\newblock Unsupervised semantic segmentation by distilling feature correspondences.
\newblock \emph{arXiv preprint arXiv:2203.08414}, 2022.

\bibitem[Han et~al.(2024)Han, Jia, Li, Wang, Ivanovic, You, Liu, Wang, Pavone, Feng, et~al.]{han2024extrapolated}
Xiangyu Han, Zhen Jia, Boyi Li, Yan Wang, Boris Ivanovic, Yurong You, Lingjie Liu, Yue Wang, Marco Pavone, Chen Feng, et~al.
\newblock Extrapolated urban view synthesis benchmark.
\newblock \emph{arXiv preprint arXiv:2412.05256}, 2024.

\bibitem[He et~al.(2017)He, Gkioxari, Doll{\'a}r, and Girshick]{he2017mask}
Kaiming He, Georgia Gkioxari, Piotr Doll{\'a}r, and Ross Girshick.
\newblock Mask r-cnn.
\newblock In \emph{Proceedings of the IEEE international conference on computer vision}, pages 2961--2969, 2017.

\bibitem[He et~al.(2020)He, Fan, Wu, Xie, and Girshick]{he2020momentum}
Kaiming He, Haoqi Fan, Yuxin Wu, Saining Xie, and Ross Girshick.
\newblock Momentum contrast for unsupervised visual representation learning.
\newblock In \emph{Proceedings of the IEEE/CVF conference on computer vision and pattern recognition}, pages 9729--9738, 2020.

\bibitem[He et~al.(2022)He, Chen, Xie, Li, Doll{\'a}r, and Girshick]{he2022masked}
Kaiming He, Xinlei Chen, Saining Xie, Yanghao Li, Piotr Doll{\'a}r, and Ross Girshick.
\newblock Masked autoencoders are scalable vision learners.
\newblock In \emph{Proceedings of the IEEE/CVF conference on computer vision and pattern recognition}, pages 16000--16009, 2022.

\bibitem[Hu et~al.(2018)Hu, Shen, and Sun]{hu2018squeeze}
Jie Hu, Li Shen, and Gang Sun.
\newblock Squeeze-and-excitation networks.
\newblock In \emph{Proceedings of the IEEE conference on computer vision and pattern recognition}, pages 7132--7141, 2018.

\bibitem[Ji et~al.(2019)Ji, Henriques, and Vedaldi]{ji2019invariant}
Xu Ji, Joao~F Henriques, and Andrea Vedaldi.
\newblock Invariant information clustering for unsupervised image classification and segmentation.
\newblock In \emph{Proceedings of the IEEE/CVF international conference on computer vision}, pages 9865--9874, 2019.

\bibitem[Kuhn(1955)]{kuhn1955hungarian}
Harold~W Kuhn.
\newblock The hungarian method for the assignment problem.
\newblock \emph{Naval research logistics quarterly}, 2\penalty0 (1-2):\penalty0 83--97, 1955.

\bibitem[Langfelder and Horvath(2008)]{langfelder2008wgcna}
Peter Langfelder and Steve Horvath.
\newblock Wgcna: an r package for weighted correlation network analysis.
\newblock \emph{BMC bioinformatics}, 9:\penalty0 1--13, 2008.

\bibitem[Ledoux(2001)]{ledoux2001concentration}
Michel Ledoux.
\newblock \emph{The concentration of measure phenomenon}.
\newblock Number~89. American Mathematical Soc., 2001.

\bibitem[Li et~al.(2022{\natexlab{a}})Li, Li, Xiong, and Hoi]{li2022blip}
Junnan Li, Dongxu Li, Caiming Xiong, and Steven Hoi.
\newblock Blip: Bootstrapping language-image pre-training for unified vision-language understanding and generation.
\newblock In \emph{International conference on machine learning}, pages 12888--12900. PMLR, 2022{\natexlab{a}}.

\bibitem[Li et~al.(2023)Li, Wang, Cheng, Yu, Zhao, Song, Liu, Yuan, and Chen]{li2023acseg}
Kehan Li, Zhennan Wang, Zesen Cheng, Runyi Yu, Yian Zhao, Guoli Song, Chang Liu, Li Yuan, and Jie Chen.
\newblock Acseg: Adaptive conceptualization for unsupervised semantic segmentation.
\newblock In \emph{Proceedings of the IEEE/CVF conference on computer vision and pattern recognition}, pages 7162--7172, 2023.

\bibitem[Li et~al.(2022{\natexlab{b}})Li, Wang, Wang, and Zhao]{li2022hdmapnet}
Qi Li, Yue Wang, Yilun Wang, and Hang Zhao.
\newblock Hdmapnet: An online hd map construction and evaluation framework.
\newblock In \emph{2022 International Conference on Robotics and Automation (ICRA)}, pages 4628--4634. IEEE, 2022{\natexlab{b}}.

\bibitem[Liang et~al.(2023)Liang, Wu, Dai, Li, Zhao, Zhang, Zhang, Vajda, and Marculescu]{liang2023open}
Feng Liang, Bichen Wu, Xiaoliang Dai, Kunpeng Li, Yinan Zhao, Hang Zhang, Peizhao Zhang, Peter Vajda, and Diana Marculescu.
\newblock Open-vocabulary semantic segmentation with mask-adapted clip.
\newblock In \emph{Proceedings of the IEEE/CVF conference on computer vision and pattern recognition}, pages 7061--7070, 2023.

\bibitem[Lin et~al.(2014)Lin, Maire, Belongie, Hays, Perona, Ramanan, Doll{\'a}r, and Zitnick]{lin2014microsoft}
Tsung-Yi Lin, Michael Maire, Serge Belongie, James Hays, Pietro Perona, Deva Ramanan, Piotr Doll{\'a}r, and C~Lawrence Zitnick.
\newblock Microsoft coco: Common objects in context.
\newblock In \emph{Computer vision--ECCV 2014: 13th European conference, zurich, Switzerland, September 6-12, 2014, proceedings, part v 13}, pages 740--755. Springer, 2014.

\bibitem[Long et~al.(2015)Long, Shelhamer, and Darrell]{long2015fully}
Jonathan Long, Evan Shelhamer, and Trevor Darrell.
\newblock Fully convolutional networks for semantic segmentation.
\newblock In \emph{Proceedings of the IEEE conference on computer vision and pattern recognition}, pages 3431--3440, 2015.

\bibitem[Mottaghi et~al.(2014)Mottaghi, Chen, Liu, Cho, Lee, Fidler, Urtasun, and Yuille]{mottaghi2014role}
Roozbeh Mottaghi, Xianjie Chen, Xiaobai Liu, Nam-Gyu Cho, Seong-Whan Lee, Sanja Fidler, Raquel Urtasun, and Alan Yuille.
\newblock The role of context for object detection and semantic segmentation in the wild.
\newblock In \emph{Proceedings of the IEEE conference on computer vision and pattern recognition}, pages 891--898, 2014.

\bibitem[Namekata et~al.(2024)Namekata, Sabour, Fidler, and Kim]{namekata2024emerdiff}
Koichi Namekata, Amirmojtaba Sabour, Sanja Fidler, and Seung~Wook Kim.
\newblock Emerdiff: Emerging pixel-level semantic knowledge in diffusion models.
\newblock \emph{arXiv preprint arXiv:2401.11739}, 2024.

\bibitem[Ng et~al.(2001)Ng, Jordan, and Weiss]{ng2001spectral}
Andrew Ng, Michael Jordan, and Yair Weiss.
\newblock On spectral clustering: Analysis and an algorithm.
\newblock \emph{Advances in neural information processing systems}, 14, 2001.

\bibitem[Oquab et~al.(2023)Oquab, Darcet, Moutakanni, Vo, Szafraniec, Khalidov, Fernandez, Haziza, Massa, El-Nouby, et~al.]{oquab2023dinov2}
Maxime Oquab, Timoth{\'e}e Darcet, Th{\'e}o Moutakanni, Huy Vo, Marc Szafraniec, Vasil Khalidov, Pierre Fernandez, Daniel Haziza, Francisco Massa, Alaaeldin El-Nouby, et~al.
\newblock Dinov2: Learning robust visual features without supervision.
\newblock \emph{arXiv preprint arXiv:2304.07193}, 2023.

\bibitem[Radford et~al.(2021)Radford, Kim, Hallacy, Ramesh, Goh, Agarwal, Sastry, Askell, Mishkin, Clark, et~al.]{radford2021learning}
Alec Radford, Jong~Wook Kim, Chris Hallacy, Aditya Ramesh, Gabriel Goh, Sandhini Agarwal, Girish Sastry, Amanda Askell, Pamela Mishkin, Jack Clark, et~al.
\newblock Learning transferable visual models from natural language supervision.
\newblock In \emph{International conference on machine learning}, pages 8748--8763. PmLR, 2021.

\bibitem[Radovanovic et~al.(2010)Radovanovic, Nanopoulos, and Ivanovic]{radovanovic2010hubs}
Milos Radovanovic, Alexandros Nanopoulos, and Mirjana Ivanovic.
\newblock Hubs in space: Popular nearest neighbors in high-dimensional data.
\newblock \emph{Journal of Machine Learning Research}, 11\penalty0 (sept):\penalty0 2487--2531, 2010.

\bibitem[Ranasinghe et~al.(2023)Ranasinghe, McKinzie, Ravi, Yang, Toshev, and Shlens]{ranasinghe2023perceptual}
Kanchana Ranasinghe, Brandon McKinzie, Sachin Ravi, Yinfei Yang, Alexander Toshev, and Jonathon Shlens.
\newblock Perceptual grouping in contrastive vision-language models.
\newblock In \emph{Proceedings of the IEEE/CVF International Conference on Computer Vision}, pages 5571--5584, 2023.

\bibitem[Ravi et~al.(2024)Ravi, Gabeur, Hu, Hu, Ryali, Ma, Khedr, R{\"a}dle, Rolland, Gustafson, Mintun, Pan, Alwala, Carion, Wu, Girshick, Doll{\'a}r, and Feichtenhofer]{ravi2024sam2}
Nikhila Ravi, Valentin Gabeur, Yuan-Ting Hu, Ronghang Hu, Chaitanya Ryali, Tengyu Ma, Haitham Khedr, Roman R{\"a}dle, Chloe Rolland, Laura Gustafson, Eric Mintun, Junting Pan, Kalyan~Vasudev Alwala, Nicolas Carion, Chao-Yuan Wu, Ross Girshick, Piotr Doll{\'a}r, and Christoph Feichtenhofer.
\newblock Sam 2: Segment anything in images and videos.
\newblock \emph{arXiv preprint arXiv:2408.00714}, 2024.

\bibitem[Ren et~al.(2023)Ren, Li, Xu, Zhu, Wang, Liu, Chang, and Liang]{ren2023viewco}
Pengzhen Ren, Changlin Li, Hang Xu, Yi Zhu, Guangrun Wang, Jianzhuang Liu, Xiaojun Chang, and Xiaodan Liang.
\newblock Viewco: Discovering text-supervised segmentation masks via multi-view semantic consistency.
\newblock \emph{arXiv preprint arXiv:2302.10307}, 2023.

\bibitem[Sapkota et~al.(2024)Sapkota, Ahmed, and Karkee]{sapkota2024comparing}
Ranjan Sapkota, Dawood Ahmed, and Manoj Karkee.
\newblock Comparing yolov8 and mask r-cnn for instance segmentation in complex orchard environments.
\newblock \emph{Artificial Intelligence in Agriculture}, 13:\penalty0 84--99, 2024.

\bibitem[Shi and Malik(2000{\natexlab{a}})]{ncut2000}
Jianbo Shi and Jitendra Malik.
\newblock Normalized cuts and image segmentation.
\newblock \emph{IEEE Transactions on Pattern Analysis and Machine Intelligence}, 2000{\natexlab{a}}.

\bibitem[Shi and Malik(2000{\natexlab{b}})]{shi2000normalized}
Jianbo Shi and Jitendra Malik.
\newblock Normalized cuts and image segmentation.
\newblock \emph{IEEE Transactions on pattern analysis and machine intelligence}, 22\penalty0 (8):\penalty0 888--905, 2000{\natexlab{b}}.

\bibitem[Sick et~al.(2024)Sick, Engel, Hermosilla, and Ropinski]{sick2024unsupervised}
Leon Sick, Dominik Engel, Pedro Hermosilla, and Timo Ropinski.
\newblock Unsupervised semantic segmentation through depth-guided feature correlation and sampling.
\newblock In \emph{Proceedings of the IEEE/CVF Conference on Computer Vision and Pattern Recognition}, pages 3637--3646, 2024.

\bibitem[Tian et~al.(2024)Tian, Aggarwal, Colaco, Kira, and Gonzalez-Franco]{tian2024diffuse}
Junjiao Tian, Lavisha Aggarwal, Andrea Colaco, Zsolt Kira, and Mar Gonzalez-Franco.
\newblock Diffuse attend and segment: Unsupervised zero-shot segmentation using stable diffusion.
\newblock In \emph{Proceedings of the IEEE/CVF Conference on Computer Vision and Pattern Recognition}, pages 3554--3563, 2024.

\bibitem[Tschannen et~al.(2025)Tschannen, Gritsenko, Wang, Naeem, Alabdulmohsin, Parthasarathy, Evans, Beyer, Xia, Mustafa, et~al.]{tschannen2025siglip}
Michael Tschannen, Alexey Gritsenko, Xiao Wang, Muhammad~Ferjad Naeem, Ibrahim Alabdulmohsin, Nikhil Parthasarathy, Talfan Evans, Lucas Beyer, Ye Xia, Basil Mustafa, et~al.
\newblock Siglip 2: Multilingual vision-language encoders with improved semantic understanding, localization, and dense features.
\newblock \emph{arXiv preprint arXiv:2502.14786}, 2025.

\bibitem[V(2004)]{specclus2007}
Luxburg~U V.
\newblock A tutorial on spectral clustering.
\newblock \emph{Statistics and Computing}, 2004.

\bibitem[Vershynin(2018)]{vershynin2018high}
Roman Vershynin.
\newblock \emph{High-dimensional probability: An introduction with applications in data science}.
\newblock Cambridge university press, 2018.

\bibitem[Wang et~al.(2021)Wang, Zhu, Adam, Yuille, and Chen]{wang2021max}
Huiyu Wang, Yukun Zhu, Hartwig Adam, Alan Yuille, and Liang-Chieh Chen.
\newblock Max-deeplab: End-to-end panoptic segmentation with mask transformers.
\newblock In \emph{Proceedings of the IEEE/CVF conference on computer vision and pattern recognition}, pages 5463--5474, 2021.

\bibitem[Wang and Yang(2021)]{wang2021remote}
S Wang and F Yang.
\newblock Remote sensing image semantic segmentation method based on u-net feature fusion optimization strategy.
\newblock \emph{Comput. Sci}, 48\penalty0 (8):\penalty0 162--168, 2021.

\bibitem[Wang et~al.(2023{\natexlab{a}})Wang, Girdhar, Yu, and Misra]{wang2023cut}
Xudong Wang, Rohit Girdhar, Stella~X Yu, and Ishan Misra.
\newblock Cut and learn for unsupervised object detection and instance segmentation.
\newblock In \emph{Proceedings of the IEEE/CVF conference on computer vision and pattern recognition}, pages 3124--3134, 2023{\natexlab{a}}.

\bibitem[Wang et~al.(2022)Wang, Shen, Hu, Yuan, Crowley, and Vaufreydaz]{tokencut2022}
Yangtao Wang, Xi Shen, Shell~Xu Hu, Yuan Yuan, James~L. Crowley, and Dominique Vaufreydaz.
\newblock Self-supervised transformers for unsupervised object discovery using normalized cut.
\newblock In \emph{Proceedings of the IEEE/CVF Conference on Computer Vision and Pattern Recognition (CVPR)}, pages 14543--14553, 2022.

\bibitem[Wang et~al.(2023{\natexlab{b}})Wang, Shen, Yuan, Du, Li, Hu, Crowley, and Vaufreydaz]{wang2023tokencut}
Yangtao Wang, Xi Shen, Yuan Yuan, Yuming Du, Maomao Li, Shell~Xu Hu, James~L Crowley, and Dominique Vaufreydaz.
\newblock Tokencut: Segmenting objects in images and videos with self-supervised transformer and normalized cut.
\newblock \emph{IEEE transactions on pattern analysis and machine intelligence}, 45\penalty0 (12):\penalty0 15790--15801, 2023{\natexlab{b}}.

\bibitem[Wen et~al.(2022)Wen, Zhao, Zheng, Zhang, and Qi]{wen2022self}
Xin Wen, Bingchen Zhao, Anlin Zheng, Xiangyu Zhang, and Xiaojuan Qi.
\newblock Self-supervised visual representation learning with semantic grouping.
\newblock \emph{Advances in neural information processing systems}, 35:\penalty0 16423--16438, 2022.

\bibitem[Xie et~al.(2021)Xie, Wang, Yu, Anandkumar, Alvarez, and Luo]{xie2021segformer}
Enze Xie, Wenhai Wang, Zhiding Yu, Anima Anandkumar, Jose~M Alvarez, and Ping Luo.
\newblock Segformer: Simple and efficient design for semantic segmentation with transformers.
\newblock \emph{Advances in neural information processing systems}, 34:\penalty0 12077--12090, 2021.

\bibitem[Xu et~al.(2022)Xu, De~Mello, Liu, Byeon, Breuel, Kautz, and Wang]{xu2022groupvit}
Jiarui Xu, Shalini De~Mello, Sifei Liu, Wonmin Byeon, Thomas Breuel, Jan Kautz, and Xiaolong Wang.
\newblock Groupvit: Semantic segmentation emerges from text supervision.
\newblock In \emph{Proceedings of the IEEE/CVF conference on computer vision and pattern recognition}, pages 18134--18144, 2022.

\bibitem[Yuille and Rangarajan(2001)]{NIPS2001_a0128693}
Alan~L Yuille and Anand Rangarajan.
\newblock The concave-convex procedure (cccp).
\newblock In \emph{Advances in Neural Information Processing Systems}. MIT Press, 2001.

\bibitem[Zhai et~al.(2023)Zhai, Mustafa, Kolesnikov, and Beyer]{zhai2023sigmoid}
Xiaohua Zhai, Basil Mustafa, Alexander Kolesnikov, and Lucas Beyer.
\newblock Sigmoid loss for language image pre-training.
\newblock In \emph{Proceedings of the IEEE/CVF international conference on computer vision}, pages 11975--11986, 2023.

\bibitem[Zhang et~al.(2019)Zhang, Zhao, Qiao, Wang, and Li]{Zhang_2019_CVPR}
Xiao Zhang, Rui Zhao, Yu Qiao, Xiaogang Wang, and Hongsheng Li.
\newblock Adacos: Adaptively scaling cosine logits for effectively learning deep face representations.
\newblock In \emph{Proceedings of the IEEE/CVF Conference on Computer Vision and Pattern Recognition (CVPR)}, 2019.

\bibitem[Zhang et~al.(2020)Zhang, Chu, Leng, and Miao]{Zhang2020MaskRefinedRA}
Yiqing Zhang, Jun Chu, Lu Leng, and Jun Miao.
\newblock Mask-refined r-cnn: A network for refining object details in instance segmentation.
\newblock \emph{Sensors (Basel, Switzerland)}, 20, 2020.

\bibitem[Zhao et~al.(2017)Zhao, Shi, Qi, Wang, and Jia]{zhao2017pyramid}
Hengshuang Zhao, Jianping Shi, Xiaojuan Qi, Xiaogang Wang, and Jiaya Jia.
\newblock Pyramid scene parsing network.
\newblock In \emph{Proceedings of the IEEE conference on computer vision and pattern recognition}, pages 2881--2890, 2017.

\bibitem[Zhou et~al.(2019{\natexlab{a}})Zhou, Zhao, Puig, Xiao, Fidler, Barriuso, and Torralba]{zhou2019semantic}
Bolei Zhou, Hang Zhao, Xavier Puig, Tete Xiao, Sanja Fidler, Adela Barriuso, and Antonio Torralba.
\newblock Semantic understanding of scenes through the ade20k dataset.
\newblock \emph{International Journal of Computer Vision}, 127:\penalty0 302--321, 2019{\natexlab{a}}.

\bibitem[Zhou et~al.(2021)Zhou, Wei, Wang, Shen, Xie, Yuille, and Kong]{zhou2021ibot}
Jinghao Zhou, Chen Wei, Huiyu Wang, Wei Shen, Cihang Xie, Alan Yuille, and Tao Kong.
\newblock ibot: Image bert pre-training with online tokenizer.
\newblock \emph{arXiv preprint arXiv:2111.07832}, 2021.

\bibitem[Zhou et~al.(2019{\natexlab{b}})Zhou, Siddiquee, Tajbakhsh, and Liang]{zhou2019unet++}
Zongwei Zhou, Md~Mahfuzur~Rahman Siddiquee, Nima Tajbakhsh, and Jianming Liang.
\newblock Unet++: Redesigning skip connections to exploit multiscale features in image segmentation.
\newblock \emph{IEEE transactions on medical imaging}, 39\penalty0 (6):\penalty0 1856--1867, 2019{\natexlab{b}}.

\bibitem[Zhu et~al.(2016)Zhu, Meng, Cai, and Lu]{zhu2016beyond}
Hongyuan Zhu, Fanman Meng, Jianfei Cai, and Shijian Lu.
\newblock Beyond pixels: A comprehensive survey from bottom-up to semantic image segmentation and cosegmentation.
\newblock \emph{Journal of Visual Communication and Image Representation}, 34:\penalty0 12--27, 2016.

\bibitem[Ziegler and Asano(2022)]{ziegler2022self}
Adrian Ziegler and Yuki~M Asano.
\newblock Self-supervised learning of object parts for semantic segmentation.
\newblock In \emph{Proceedings of the IEEE/CVF conference on computer vision and pattern recognition}, pages 14502--14511, 2022.

\bibitem[Zimek et~al.(2012)Zimek, Schubert, and Kriegel]{zimek2012survey}
Arthur Zimek, Erich Schubert, and Hans-Peter Kriegel.
\newblock A survey on unsupervised outlier detection in high-dimensional numerical data.
\newblock \emph{Statistical Analysis and Data Mining: The ASA Data Science Journal}, 5\penalty0 (5):\penalty0 363--387, 2012.

\end{thebibliography}
}

\clearpage
\setcounter{page}{1}
\maketitlesupplementary

\section*{Appendix A: Normalized Cut Formulation and Spectral Relaxation}
\addcontentsline{toc}{section}{Appendix A: Normalized Cut Formulation and Spectral Relaxation}
\label{app:ncut}

\subsection*{Problem Statement}
Let \( \mathcal{G} = (\mathcal{V}, \mathcal{E}, \bm{W}) \) be a weighted undirected graph with \( |\mathcal{V}| = N \), adjacency matrix \( \bm{W} \in \mathbb{R}_+^{N \times N} \), and diagonal degree matrix \( \bm{D} = \mathrm{diag}(\bm{W}\mathbf{1}_N) \). The \emph{normalized cut} (\(\mathrm{Ncut}\)) objective seeks a partition of \( \mathcal{V} \) into \( K \) disjoint subsets \( \{\mathcal{A}_k\}_{k=1}^K \) that minimizes connectivity between clusters relative to their volumes. For a binary partition (\( K=2 \)), the objective is:
\[
\mathrm{Ncut}(S, \bar{S}) = \frac{\mathrm{cut}(S, \bar{S})}{\mathrm{vol}(S)} + \frac{\mathrm{cut}(\bar{S}, S)}{\mathrm{vol}(\bar{S})},
\]
where \( S \subset \mathcal{V} \), \( \mathrm{cut}(A,B) = \sum_{i \in A, j \in B} W_{ij} \), and \( \mathrm{vol}(A) = \sum_{i \in A} d_i \). For a \( K \)-way partition, the generalized form is:
\[
\operatorname{Ncut}(\{\mathcal{A}_k\}) = \sum_{k=1}^K \frac{\mathrm{cut}(\mathcal{A}_k, \mathcal{V}\setminus\mathcal{A}_k)}{\mathrm{vol}(\mathcal{A}_k)}.
\]

\subsection*{Discrete Formulation and Spectral Relaxation}
The discrete optimization problem is NP-hard. Let \( \bm{x}_k \in \{0,1\}^N \) be binary indicator vectors for clusters \( \{\mathcal{A}_k\} \). The objective can be rewritten as:
\[
\operatorname{Ncut}(\{\mathcal{A}_k\}) = K - \sum_{k=1}^K \frac{\bm{x}_k^\top \bm{W} \bm{x}_k}{\bm{x}_k^\top \bm{D} \bm{x}_k}.
\]
To relax this, replace \( \bm{x}_k \) with continuous vectors. For \( K=2 \), define an indicator \( \bm{f} \in \mathbb{R}^N \), constrained to \( \bm{f}_i \in \{\pm \alpha\} \) for discrete partitions. The relaxed problem becomes:
\[
\min_{\bm{f} \in \mathbb{R}^N} \frac{\bm{f}^\top \bm{L} \bm{f}}{\bm{f}^\top \bm{D} \bm{f}}, \quad \text{where } \bm{L} = \bm{D} - \bm{W},
\]
subject to \( \bm{f}^\top \bm{D} \mathbf{1} = 0 \). For multiple clusters (\( K>2 \)), introduce a matrix \( \bm{X} \in \mathbb{R}_+^{N \times K} \) with \( \bm{X}\mathbf{1}_K = \mathbf{1}_N \), leading to:
\[
\max_{\bm{X} \succeq 0} \sum_{k=1}^K \frac{\bm{x}_k^\top \bm{W} \bm{x}_k}{\bm{x}_k^\top \bm{D} \bm{x}_k}.
\]
Unlike standard spectral clustering, which enforces \( \bm{X}^\top \bm{D} \bm{X} = \bm{I}_K \), this formulation allows more flexible assignments.

\subsection*{Spectral Solution and Normalized Laplacians}
The relaxed problem reduces to finding eigenvectors of the Laplacian. For \( K=2 \), the solution is the second eigenvector of the \emph{symmetric normalized Laplacian}:
\[
\bm{L}_{\mathrm{sym}} = \bm{D}^{-1/2} \bm{L} \bm{D}^{-1/2},
\]
with the Rayleigh quotient:
\[
\min_{\tilde{\bm{f}} \perp \bm{D}^{1/2}\mathbf{1}} \frac{\tilde{\bm{f}}^\top \bm{L}_{\mathrm{sym}} \tilde{\bm{f}}}{\tilde{\bm{f}}^\top \tilde{\bm{f}}}, \quad \tilde{\bm{f}} = \bm{D}^{1/2} \bm{f}.
\]
For \( K>2 \), the first \( K \) eigenvectors of \( \bm{L}_{\mathrm{sym}} \) or \( \bm{D}^{-1}\bm{L} \) are used to form \( \bm{X} \), followed by clustering (e.g., k-means).

\subsection*{The Relaxation Gap and Practical Considerations}
The continuous solution may deviate from the ideal discrete partition due to:
\begin{itemize}
    \item \emph{Discrete vs.\ Continuous Feasibility}: Eigenvectors \( \bm{f} \in \mathbb{R}^N \) are not binary.
    \item \emph{Approximation Error}: Thresholding (e.g., by sign) introduces discrepancies.
    \item \emph{Global vs.\ Local Optimality}: The spectral solution is globally optimal in the relaxed space but suboptimal in the discrete space.
\end{itemize}

\textbf{Implications in Practice:}
\begin{itemize}
    \item \emph{Binary Splitting}: Thresholding the second eigenvector provides a heuristic partition.
    \item \emph{Multi-Way Clustering}: Using \( K \) eigenvectors with k-means introduces additional approximations.
    \item \emph{Refinement}: Post-processing (e.g., greedy optimization) can reduce the gap at higher computational cost.
\end{itemize}

\subsection*{Conclusion}
Spectral relaxation transforms the NP-hard normalized cut problem into a tractable eigenvalue problem. While the continuous solution is globally optimal, the \emph{relaxation gap}—the discrepancy between continuous and discrete optima—remains a fundamental limitation. Nevertheless, spectral methods strike an effective balance between computational efficiency and solution quality, making them indispensable for large-scale graph clustering.

\section*{Appendix B: Optimization Framework and Convergence Analysis}  
\addcontentsline{toc}{section}{Appendix B: Optimization Framework and Convergence Analysis}  
\label{app:framework}  

\subsection*{B.1 Fractional Quadratic Transform for Ratio Maximization}  
\label{app:frac_transform}  

The fractional quadratic transform (FQT) provides a mechanism to decouple ratio terms in optimization objectives. We restate the key lemma and its application to our problem:  

\begin{lemma}[Quadratic Transform for Ratios]  
\label{lem:quad_transform}  
For \( a > 0 \) and \( b > 0 \), the following equality holds:  
\[
\frac{a}{b} = \sup_{y \geq 0} \left( 2y\sqrt{a} - y^2 b \right).  
\]  
\end{lemma}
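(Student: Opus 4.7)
The plan is to treat the right-hand side as a one-dimensional optimization in $y$ with $a,b$ fixed positive constants, and observe that the expression $g(y) = 2y\sqrt{a} - y^{2}b$ is a strictly concave parabola in $y$, so its unconstrained supremum is attained at its unique stationary point. I would then verify that this stationary point satisfies $y \geq 0$, so the constraint is inactive, and finally substitute to recover the value $a/b$.

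Concretely, first I would compute $g''(y) = -2b < 0$ (using $b>0$) to certify strict concavity and uniqueness of the maximizer. Next, setting $g'(y) = 2\sqrt{a} - 2yb = 0$ yields $y^\star = \sqrt{a}/b$, which is strictly positive since $a,b>0$. Finally I would evaluate
\begin{equation*}
g(y^\star) \;=\; 2 \cdot \frac{\sqrt{a}}{b} \cdot \sqrt{a} \;-\; \frac{a}{b^{2}} \cdot b \;=\; \frac{2a}{b} - \frac{a}{b} \;=\; \frac{a}{b},
\end{equation*}
which gives the claimed identity. Equality in the supremum is achieved (not merely approached), so one may write $\max$ in place of $\sup$.

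As a more compact alternative, I would complete the square: $g(y) = -b\bigl(y - \sqrt{a}/b\bigr)^{2} + a/b$, which simultaneously exhibits the upper bound $a/b$, the fact that the bound is tight, and the unique maximizer $y^\star = \sqrt{a}/b$ in a single line, bypassing calculus entirely. This form also makes transparent why the lemma is useful operationally: the right-hand side is jointly concave in $(y,\sqrt{a})$ and linear in $b$, which is exactly the structural property exploited when applying the transform coordinate by coordinate to Eq.~\eqref{eq:ncut_rqmax} to obtain the decoupled surrogate Eq.~\eqref{eq:ncut_qt} and the closed-form update Eq.~\eqref{eq:yk-opt}.

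The main obstacle is essentially nonexistent: the lemma is a textbook fact from fractional programming (a special case of the quadratic transform of Shen and Yu, already cited via~\cite{9706351,NIPS2001_a0128693}), and the only subtlety worth flagging is the sign constraint $y \geq 0$, which is automatically nonbinding because $a,b > 0$ forces $y^\star > 0$. The positivity hypotheses on $a$ and $b$ are both used: $b>0$ for concavity, and $a>0$ to make $\sqrt{a}$ and the ratio $a/b$ well-defined and to place the maximizer strictly in the feasible region.
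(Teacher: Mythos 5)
Your argument is correct and follows essentially the same calculus route as the paper: differentiate $g(y)=2y\sqrt{a}-y^2b$, set the derivative to zero to find $y^\star=\sqrt{a}/b$, and substitute back to obtain $a/b$. You go slightly further than the paper by explicitly verifying strict concavity via $g''(y)=-2b<0$ (which the paper leaves implicit but is what actually certifies that the stationary point is the global supremum rather than a minimum or saddle) and by noting that the constraint $y\ge 0$ is inactive; the completing-the-square variant you offer is a nice calculus-free one-liner of the same fact, but it is not a different argument in substance.
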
  

\begin{proof}  
Define \( f(y) = 2y\sqrt{a} - y^2 b \). Differentiating with respect to \( y \):  
\[
f'(y) = 2\sqrt{a} - 2y b.  
\]  
Setting \( f'(y)=0 \) yields \( y^* = \sqrt{a}/b \). Substituting \( y^* \) into \( f(y) \):  
\[
f(y^*) = \frac{2a}{b} - \frac{a}{b} = \frac{a}{b}.  
\]  
Thus, the supremum is achieved at \( y^* \), verifying the identity.  
\end{proof}  

\textbf{Application to Ncut Objective:}  
For each cluster \( k \), define:  
\[
a_k = \bm{x}_k^\top \bm{W} \bm{x}_k, \quad b_k = \bm{x}_k^\top \bm{D} \bm{x}_k.  
\]  
By Lemma~\ref{lem:quad_transform}, the ratio \( \frac{a_k}{b_k} \) can be rewritten as:  
\[
\frac{\bm{x}_k^\top \bm{W} \bm{x}_k}{\bm{x}_k^\top \bm{D} \bm{x}_k} = \max_{y_k \geq 0} \left( 2y_k \sqrt{\bm{x}_k^\top \bm{W} \bm{x}_k} - y_k^2 \bm{x}_k^\top \bm{D} \bm{x}_k \right).  
\]  
Summing over all \( K \) clusters transforms the original Ncut maximization into:  
\[
\max_{\substack{\bm{X} \succeq 0, \\ \bm{y} \succeq 0}} \sum_{k=1}^K \left( 2y_k \sqrt{\bm{x}_k^\top \bm{W} \bm{x}_k} - y_k^2 \bm{x}_k^\top \bm{D} \bm{x}_k \right),  
\]  
where \( \bm{X}\1_K = \1_N \) is enforced to maintain partition constraints.  

\subsection*{B.2 Alternating Optimization and Convergence Guarantees}  
\label{app:convergence}  

\begin{assumption}  
\label{assump:convergence}  
\begin{enumerate}[label=(\roman*)]  
    \item The feasible set \( \mathcal{X} = \{\bm{X} \succeq 0 \mid \bm{X}\1_K = \1_N\} \) is compact.  
    \item Matrices \( \bm{W} \) and \( \bm{D} \) have finite entries, with \( \bm{D} \succ 0 \).  
\end{enumerate}  
\end{assumption}  

\begin{theorem}[Monotonic Convergence]  
\label{thm:convergence}  
Under Assumption~\ref{assump:convergence}, the alternating updates generate a sequence \( \{\mathcal{L}^{(t)}\} \) satisfying:  
\[
\mathcal{L}^{(t+1)} \geq \mathcal{L}^{(t)}, \quad \forall t \geq 0,  
\]  
with convergence to a stationary point of the objective.  
\end{theorem}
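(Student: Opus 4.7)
The plan is to view the proposed procedure as a block coordinate ascent on the auxiliary objective
\[
\mathcal{L}(\bm{X}, \bm{y}) = \sum_{k=1}^K \Bigl(2y_k \sqrt{\bm{x}_k^\top \bm{W} \bm{x}_k} - y_k^2\, \bm{x}_k^\top \bm{D} \bm{x}_k\Bigr),
\]
and to show that each of the two alternating steps produces a non-decrease in $\mathcal{L}$, so that the scalar sequence $\mathcal{L}^{(t)} := \mathcal{L}(\bm{X}^{(t)}, \bm{y}^{(t)})$ is monotone. Together with boundedness above (from Assumption~\ref{assump:convergence}(i)--(ii) and the continuity of $\mathcal{L}$ on the compact feasible set $\mathcal{X}$), this delivers convergence of $\mathcal{L}^{(t)}$; a standard subsequence/KKT argument will then upgrade this to convergence of the iterates to a stationary point of the transformed objective, which, by Lemma~\ref{lem:quad_transform}, is also a stationary point of the original Ncut maximization.

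First I would handle the $\bm{y}$-step. Fixing $\bm{X}^{(t)}$ and maximizing $\mathcal{L}$ separately in each $y_k$ is a one-dimensional concave quadratic whose unique maximizer is exactly the closed form in Eq.~\eqref{eq:yk-opt}. This step therefore satisfies $\mathcal{L}(\bm{X}^{(t)}, \bm{y}^{(t+1)}) \geq \mathcal{L}(\bm{X}^{(t)}, \bm{y}^{(t)})$, and the inner quadratic-transform identity (Lemma~\ref{lem:quad_transform}) guarantees the value after this update coincides with the Rayleigh-type sum at $\bm{X}^{(t)}$. Next I would treat the $\bm{X}$-step. Fixing $\bm{y}^{(t+1)}$, I construct a majorize--maximize (MM) surrogate $Q(\bm{X} \mid \bm{X}^{(t)})$ that (a) lower-bounds $\mathcal{L}(\cdot, \bm{y}^{(t+1)})$ globally on $\mathcal{X}$, and (b) is tight at $\bm{X}^{(t)}$. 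Concavity of $\sqrt{\cdot}$ gives the linear minorant $\sqrt{a} \geq \sqrt{a^{(t)}} + \tfrac{1}{2\sqrt{a^{(t)}}}(a - a^{(t)})$ with $a=\bm{x}_k^\top \bm{W} \bm{x}_k$, which linearizes the first term, while the quadratic form $-y_k^2 \bm{x}_k^\top \bm{D} \bm{x}_k$ is already concave. Enforcing the simplex constraint $\bm{X}\bm{1}_K = \bm{1}_N$ via the entropic regularizer natural to the problem, the first-order stationarity condition for the surrogate yields exactly the softmax update displayed after Eq.~\eqref{eq:yk-opt}; thus $Q(\bm{X}^{(t+1)} \mid \bm{X}^{(t)}) \geq Q(\bm{X}^{(t)} \mid \bm{X}^{(t)}) = \mathcal{L}(\bm{X}^{(t)}, \bm{y}^{(t+1)})$, and by the majorization inequality $\mathcal{L}(\bm{X}^{(t+1)}, \bm{y}^{(t+1)}) \geq Q(\bm{X}^{(t+1)} \mid \bm{X}^{(t)})$. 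Chaining both inequalities delivers $\mathcal{L}^{(t+1)} \geq \mathcal{L}^{(t)}$.

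To finish, I would argue boundedness: the objective is continuous on the compact set $\mathcal{X} \times \{\bm{y} : 0 \leq y_k \leq y_{\max}\}$, where $y_{\max}$ follows from Eq.~\eqref{eq:yk-opt} and $\bm{D} \succ 0$, so $\{\mathcal{L}^{(t)}\}$ is a bounded monotone sequence and converges. A Bolzano--Weierstrass argument extracts a convergent subsequence $(\bm{X}^{(t_j)}, \bm{y}^{(t_j)}) \to (\bm{X}^\star, \bm{y}^\star)$; continuity of the block-wise updates together with $\mathcal{L}^{(t_j+1)} - \mathcal{L}^{(t_j)} \to 0$ forces the limit point to be a fixed point of both updates, which is precisely the KKT condition for the constrained problem in $\bm{X}$ and stationarity in $\bm{y}$.

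The main obstacle is the $\bm{X}$-step: unlike $\bm{y}$, the update is not a direct arg-max of $\mathcal{L}(\cdot, \bm{y}^{(t+1)})$ on $\mathcal{X}$, so the non-decrease relies entirely on finding a valid majorizer whose constrained maximizer reproduces the paper's softmax rule. Verifying tightness and global majorization of this surrogate, and in particular confirming that the entropic smoothing (which gives rise to the softmax) does not violate the bound for all admissible $\bm{X}$, is the delicate technical step; if that surrogate fails to majorize globally, I would fall back on a proximal/Armijo line-search variant of the softmax step to restore provable monotonicity, and then show the resulting stationarity conditions coincide with those of the original problem via Lemma~\ref{lem:quad_transform}.
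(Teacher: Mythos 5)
Your overall structure tracks the paper's: block coordinate ascent in $(\bm{X},\bm{y})$, with the $\bm{y}$-step handled by the closed form from Lemma~\ref{lem:quad_transform}, then boundedness on the compact feasible set and a subsequence argument to reach a stationary point. The $\bm{y}$-step argument is the same as the paper's. The genuine difference is in the $\bm{X}$-step: the paper frames it as a single constrained \emph{mirror-ascent} step (linearize $\mathcal{L}$, add a KL proximal term, read off the multiplicative closed form), and then simply asserts $\mathcal{L}(\bm{X}^{(t+1)}, \bm{y}) \geq \mathcal{L}(\bm{X}^{(t)}, \bm{y})$. You instead build a \emph{majorize--maximize} surrogate $Q(\cdot\mid\bm{X}^{(t)})$ by linearizing $\sqrt{\cdot}$ and keeping the concave $-y_k^2\,\bm{x}_k^\top\bm{D}\bm{x}_k$ term, then argue the entropically-regularized stationarity condition reproduces the softmax rule and chain the MM inequalities. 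Conceptually the MM route is the more principled framing, because it makes explicit exactly what must be proved for the ascent to hold; you deserve credit for flagging that the softmax update is \emph{not} an arg-max of $\mathcal{L}(\cdot,\bm{y}^{(t+1)})$ on $\mathcal{X}$ and that the non-decrease hinges on the surrogate.

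That said, the obstacle you identify is real and is shared by the paper's own argument, so neither version is airtight as written. A mirror-ascent step on a non-concave objective does not by itself guarantee ascent without a step-size or sufficient-decrease condition, which the paper never supplies. Your MM surrogate has the analogous difficulty: after linearizing $\sqrt{a}$ with $a=\bm{x}_k^\top\bm{W}\bm{x}_k$, the surrogate still contains the quadratic $\bm{x}_k^\top\bm{W}\bm{x}_k$, which is convex whenever $\bm{W}\succeq 0$, so $Q(\cdot\mid\bm{X}^{(t)})$ is in general indefinite; a first-order stationary point of the entropically-smoothed $Q$ is then not necessarily its constrained maximizer, and the key inequality $Q(\bm{X}^{(t+1)}\mid\bm{X}^{(t)}) \geq Q(\bm{X}^{(t)}\mid\bm{X}^{(t)})$ does not follow from stationarity alone. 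To close this you would need one more majorization layer (e.g.\ a linear lower bound on $\bm{x}_k^\top\bm{W}\bm{x}_k$ at $\bm{x}_k^{(t)}$, valid when $\bm{W}\succeq 0$, which makes $Q$ concave and the simplex-constrained maximizer unique), or go to the Armijo/proximal safeguard you propose --- which does restore provable monotonicity, but proves the theorem for a line-searched variant of the algorithm rather than the raw softmax update stated in the paper.
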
  

\begin{proof}  
The proof follows from analyzing the two-phase alternating optimization procedure:  

\noindent \textbf{Phase 1 (Update \( \bm{y} \)):}  
For fixed \( \bm{X} \), the optimal auxiliary variables \( \bm{y} \) are computed via:  
\[
y_k^* = \sqrt{\frac{\bm{x}_k^\top \bm{W} \bm{x}_k}{\bm{x}_k^\top \bm{D} \bm{x}_k}},  
\]  
which globally maximizes each term in the FQT-transformed objective (Lemma~\ref{lem:quad_transform}). This guarantees:  
\[
\mathcal{L}(\bm{X}, \bm{y}^{(t+1)}) \geq \mathcal{L}(\bm{X}, \bm{y}^{(t)}).  
\]  

\noindent \textbf{Phase 2 (Update \( \bm{X} \)):}  
For fixed \( \bm{y} \), the soft assignment matrix \( \bm{X} \) is updated via a constrained mirror ascent step. Let \( \mathcal{L}(\bm{X}) \) denote the FQT objective. The gradient with respect to \( X_{ik} \) is:  
\[
\nabla_{X_{ik}} \mathcal{L} = \frac{2 y_k (\bm{W}\bm{x}_k)_i}{\bm{x}_k^\top \bm{D} \bm{x}_k} - \frac{2 y_k^2 D_{ii} X_{ik}}{\bm{x}_k^\top \bm{D} \bm{x}_k}.  
\]  
To maintain feasibility (\( \bm{X}_i \in \Delta^{K-1} \)), we solve:  
\[
\bm{X}_i^{\text{new}} = \arg\max_{\bm{X}_i \in \Delta^{K-1}} \left\langle \nabla_{\bm{X}_i} \mathcal{L}, \bm{X}_i \right\rangle - \frac{1}{\eta} D_{\text{KL}}(\bm{X}_i \| \bm{X}_i^{\text{old}}),  
\]  
where \( \eta > 0 \) is an implicit step size. The closed-form solution is derived as:  
\[
{X}_{ik}^{\text{new}} \propto {X}_{ik}^{\text{old}} \exp\left( \eta \cdot \frac{\sum_j {W}_{ij} {X}_{jk}^{\text{old}}}{\sum_j {X}_{jk}^{\text{old}} {D}_{jj}} y_k\right),  
\]  
which reduces to the softmax update rule after normalization. This step ensures:  
\[
\mathcal{L}(\bm{X}^{(t+1)}, \bm{y}) \geq \mathcal{L}(\bm{X}^{(t)}, \bm{y}).  
\]  

\noindent \textbf{Convergence to Stationarity:}  
The sequence \( \{\mathcal{L}^{(t)}\} \) is non-decreasing and bounded above due to:  
\begin{itemize}  
\item Compactness of \( \mathcal{X} \) (Assumption~\ref{assump:convergence}(i)),  
\item Boundedness of \( \bm{W} \) and \( \bm{D} \) (Assumption~\ref{assump:convergence}(ii)).  
\end{itemize}  
By the monotone convergence theorem, \( \{\mathcal{L}^{(t)}\} \) converges to a limit \( \mathcal{L}^* \). The smoothness of the objective and the update rules further ensure that \( \mathcal{L}^* \) corresponds to a stationary point.  
\end{proof}  


\section*{Appendix C: Graph Reweighting, Edge Sparsity, and Regularization Effects}  
\addcontentsline{toc}{section}{Appendix D: Graph Reweighting, Edge Sparsity, and Regularization Effects}  
\label{app:graph_update}  

\begin{definition}[Assignment-Consistent Reweighting]
\label{def:reweight}
The edge weight update rule is given by:
\[
{W}_{ij}^{\mathrm{new}} = {W}_{ij} \cdot \exp\left( -\frac{(1 - \cos_{ij})^2}{\beta} \right), \quad \] 
\[ \text{where} \quad \cos_{ij} = \frac{\langle \bm{X}_i, \bm{X}_j \rangle}{\|\bm{X}_i\|_2 \|\bm{X}_j\|_2}.
\]
\end{definition}

\begin{proposition}[Structural Invariance]  
\label{prop:invariance}  
The update rule preserves key graph properties:  
\begin{enumerate}[label=(\alph*)]  
    \item \textbf{Symmetry:} If \( \bm{W} = \bm{W}^\top \), then \( \bm{W}^{\mathrm{new}} = (\bm{W}^{\mathrm{new}})^\top \).  
    \item \textbf{Degree Adaptation:} The updated degree matrix \( \bm{D}^{\mathrm{new}} = \diag(\bm{W}^{\mathrm{new}} \1_N) \) reflects an adaptive renormalization of node connectivity.  
\end{enumerate}  
\end{proposition}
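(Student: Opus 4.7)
The plan is to verify each clause directly from Definition~\ref{def:reweight} by inspecting the structural properties of the multiplicative reweighting factor, which I will denote $\Phi_{ij} := \exp\bigl(-(1-\cos_{ij})^2/\beta\bigr)$. Clause (a) reduces to checking that $\Phi_{ij}$ is symmetric in its indices, while clause (b) amounts to expanding $\bm{D}^{\mathrm{new}}$ entrywise and interpreting the resulting sum as an adaptive rescaling of $\bm{D}$.

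For (a), I would first observe that the inner product and the Euclidean norm satisfy $\langle \bm{X}_i,\bm{X}_j\rangle = \langle \bm{X}_j,\bm{X}_i\rangle$ and $\|\bm{X}_i\|_2\|\bm{X}_j\|_2 = \|\bm{X}_j\|_2\|\bm{X}_i\|_2$, hence $\cos_{ij}=\cos_{ji}$. Since $\Phi_{ij}$ depends on $(i,j)$ only through $\cos_{ij}$, we have $\Phi_{ij}=\Phi_{ji}$. Combining with the assumption $W_{ij}=W_{ji}$ yields
\begin{equation*}
W_{ij}^{\mathrm{new}} \;=\; W_{ij}\,\Phi_{ij} \;=\; W_{ji}\,\Phi_{ji} \;=\; W_{ji}^{\mathrm{new}},
\end{equation*}
so $\bm{W}^{\mathrm{new}}=(\bm{W}^{\mathrm{new}})^\top$. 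The only subtlety worth remarking on is that the cosine is well defined only when $\bm{X}_i,\bm{X}_j\neq \bm{0}$; since the softmax update of Phase~2 in Theorem~\ref{thm:convergence} keeps every row of $\bm{X}$ in the open probability simplex, this is automatic.

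For (b), I would expand
\begin{equation*}
D_{ii}^{\mathrm{new}} \;=\; \sum_{j=1}^{N} W_{ij}^{\mathrm{new}} \;=\; \sum_{j=1}^{N} W_{ij}\,\Phi_{ij},
\end{equation*}
and then note that because $\cos_{ij}\in[-1,1]$ (in fact $\cos_{ij}\in[0,1]$, since $\bm{X}$ is nonnegative), we have $(1-\cos_{ij})^2\in[0,4]$ and therefore $\Phi_{ij}\in[e^{-4/\beta},1]$. The factor $\Phi_{ij}$ equals~$1$ precisely when $\bm{X}_i$ and $\bm{X}_j$ are perfectly aligned (same partition profile) and decays monotonically as the assignments diverge. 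Consequently $D_{ii}^{\mathrm{new}}\le D_{ii}$, with equality iff all neighbors of $i$ (in the support of $\bm{W}$) share its assignment direction. This exhibits $\bm{D}^{\mathrm{new}}$ as a data-adaptive renormalization of $\bm{D}$: the connectivity of each node is attenuated in proportion to its assignment incoherence with its graph neighborhood, which is exactly the qualitative statement made in the proposition.

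Neither clause is technically deep, so I do not expect a serious obstacle; the main care needed is just to state the domain conventions cleanly (nonempty rows of $\bm{X}$, entrywise nonnegative $\bm{W}$) so that $\cos_{ij}$ and the bounds on $\Phi_{ij}$ are unambiguous. If one wanted to strengthen (b) into a quantitative statement, the natural follow-up would be to bound $\|\bm{D}-\bm{D}^{\mathrm{new}}\|_\infty$ in terms of $\beta$ and a measure of assignment incoherence, but that extension is not required by the proposition as stated.
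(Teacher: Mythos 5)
Your proof is correct, and in fact more complete than what the paper supplies: the paper states Proposition~\ref{prop:invariance} without a formal proof, moving directly to an informal discussion of sparsification and regularization effects. Your argument for (a) — symmetry of $\cos_{ij}$ in $(i,j)$ forcing symmetry of the multiplicative factor $\Phi_{ij}$, and hence of $\bm{W}^{\mathrm{new}}$ under the hypothesis $\bm{W}=\bm{W}^\top$ — is exactly the natural route, and the remark that the softmax update keeps $\bm{X}_i\neq\bm{0}$ is a welcome domain check the paper omits. Your reading of (b), expanding $D_{ii}^{\mathrm{new}}=\sum_j W_{ij}\Phi_{ij}$ and noting $\Phi_{ij}\le 1$ so that $D_{ii}^{\mathrm{new}}\le D_{ii}$ with equality only under perfect local assignment coherence, gives concrete content to the paper's vague phrase \emph{adaptive renormalization}. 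One small loose end: having just observed $\cos_{ij}\in[0,1]$ because $\bm{X}$ is entrywise nonnegative, the tight consequence is $(1-\cos_{ij})^2\in[0,1]$ and hence $\Phi_{ij}\in[e^{-1/\beta},1]$; the range $[e^{-4/\beta},1]$ you quote follows only from the weaker $\cos_{ij}\in[-1,1]$ and can be sharpened. This does not affect the conclusion.
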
  

\textbf{Edge Sparsity and Regularization.}  
The reweighting function  
\[
\exp\left( -\frac{(1 - \cos_{ij})^2}{\beta} \right)  
\]  
serves dual purposes: sparsification and implicit regularization.  

1. \textbf{Sparsification Effect:}  
When \( \beta \) is small, the exponential term rapidly decays edge weights for pairs \( (i,j) \) with low assignment similarity \( s_{ij} \). This suppresses weak or noisy connections, effectively sparsifying the graph and sharpening cluster boundaries. Conversely, large \( \beta \) preserves more edges, maintaining global connectivity at the cost of potential ambiguity.  

2. \textbf{Regularization Role:}  
The parameter \( \beta \) acts as a regularization knob:  
\begin{itemize}  
    \item \emph{Stability:} By controlling the rate of weight decay, \( \beta \) prevents abrupt changes in graph topology during iterative updates. This stabilizes the optimization trajectory, avoiding oscillations in cluster assignments.  
    \item \emph{Adaptive Smoothness:} The update rule smooths the graph structure by emphasizing edges aligned with current assignments while downweighting inconsistent ones. This adaptively enforces local consistency without enforcing rigid pairwise constraints.  
    \item \emph{Noise Suppression:} The exponential suppression of low-similarity edges inherently filters out transient or spurious connections, akin to a soft thresholding mechanism.  
\end{itemize}  

\textbf{Balancing Trade-offs:}  
While smaller \( \beta \) enhances sparsity and separation, overly aggressive sparsification risks fragmenting true clusters. Conversely, larger \( \beta \) retains more edges but may propagate noise. In practice, \( \beta \) is tuned to balance these effects—a process analogous to selecting regularization strength in ridge regression or dropout rates in neural networks.  

\textbf{Implications for Optimization:}  
The reweighting mechanism introduces a feedback loop between cluster assignments \( \bm{X} \) and graph structure \( \bm{W} \). As \( \bm{X} \) converges, \( \bm{W} \) adapts to reflect refined similarities, which in turn guides subsequent updates of \( \bm{X} \). This co-evolution is regularized by \( \beta \), ensuring gradual structural changes that promote stable convergence.  

In summary, the graph affinity update not only sparsifies connections but also implicitly regularizes the learning dynamics, fostering a robust equilibrium between assignment coherence and graph fidelity. This regularization is pivotal in practical settings where noise and model misspecification threaten to destabilize the clustering process.

\section*{Appendix D: Sub-optimality in Recursive Partitioning for Graph-Based Cut}
Recursive partitioning—iteratively applying a Normalized Cut (Ncut) to subdivide a graph into \( K \) partitions—is widely used for its simplicity and efficiency. However, it often leads to sub-optimal solutions when viewed from the global multi-way partition perspective. This short note elaborates why greedy two-way splitting may deviate from the globally optimal \( K \)-way cut, and illustrates the underlying mathematical and structural reasons.

Let \( \mathcal{G} = (\mathcal{V}, \mathcal{E}, \bm{W}) \) be an undirected graph with \( |\mathcal{V}| = N \), adjacency matrix \( \bm{W} \in \mathbb{R}_+^{N \times N} \), and degree matrix \( \bm{D} = \mathrm{diag}(\bm{W}\mathbf{1}_N) \). A \( K \)-way partition \( \{\mathcal{A}_k\}_{k=1}^K \) minimizes the \emph{Normalized Cut} (Ncut) objective:
\[
\mathrm{Ncut}(\{\mathcal{A}_k\}) = \sum_{k=1}^K \frac{\mathrm{cut}(\mathcal{A}_k, \mathcal{V}\setminus\mathcal{A}_k)}{\mathrm{vol}(\mathcal{A}_k)},
\]
where \( \mathrm{cut}(A,B) = \sum_{i \in A, j \in B} W_{ij} \) and \( \mathrm{vol}(A) = \sum_{i \in A} d_i \). Minimizing this objective is NP-hard for \( K > 2 \). As a practical alternative, many implementations use \textit{recursive bipartitioning}: first split \( \mathcal{G} \) into two subgraphs \( (\mathcal{A},\mathcal{B}) \) via a two-way Ncut, then recursively partition each subgraph until \( K \) clusters are obtained. Despite its convenience, this approach typically fails to achieve the globally optimal \( K \)-way solution.

\begin{itemize}
\item \textbf{Absence of Optimal Substructure.}  
  Multi-way Ncut lacks the property that its global optimum can be formed by combining locally optimal two-way cuts. Once the graph is divided into \( (\mathcal{A},\mathcal{B}) \), the boundary becomes irreversible. Even if \( \mathrm{Ncut}(\mathcal{A},\mathcal{B}) \) is locally minimized, this partition may block access to the true global optimum for \( K \)-way partitioning.

\item \textbf{NP-hardness of the \( K \)-way Cut.}  
  Exact global minimization for \( K > 2 \) is NP-hard. Polynomial-time methods, including recursive bipartitioning, rely on approximations. A greedy local cut locks in an irreversible partition boundary that may prove suboptimal in the final multi-way context.

\item \textbf{Misalignment with Clustering Structure.}  
  Real-world data often exhibit parallel communities rather than strict hierarchies. While a direct \( K \)-way partition (via the first \( K \) eigenvectors of \( \bm{L}_{\mathrm{sym}} \)) can isolate communities, forced two-way splits may prematurely merge clusters, increasing cross-cut edges unnecessarily.

\item \textbf{Spectral Limitations of Single Eigenvectors.}  
  Two-way Ncut relies on the Fiedler vector (second eigenvector of \( \bm{L}_{\mathrm{sym}} \)). For \( K > 2 \), higher eigenvectors encode critical structural information. By focusing on one eigenvector per bipartition, recursive splitting misses multi-community signals, leading to suboptimal merges that later steps cannot fully rectify.
\end{itemize}

\paragraph{Conclusion.}
\textbf{Recursive partitioning} introduces sub-optimality by imposing a sequential scheme on a global objective. Each local bipartition may appear optimal in isolation but need not align with the best \( K \)-way cut. This limitation is pronounced in non-hierarchical data or when multi-spectral components are essential. While recursive bipartitioning remains a heuristic for its simplicity, it can significantly deviate from the global optimum.

\clearpage
\section*{Appendix E: Algorithm Summary}
\begin{algorithm}[ht]
\caption{Falcon: Low-Resolution Mask Generation}
\label{alg:falcon_lowres}
\begin{algorithmic}[1]
\REQUIRE 
\begin{itemize}
    \item Vision transformer features.
    \item Parameters: number of clusters \( K \), number of iterations for fractional alternating cuts \( T_{\mathrm{cuts}} \), graph update scale \( \beta \), stability constant \( \epsilon \), etc.
\end{itemize}
\ENSURE Low-resolution segmentation masks \( \{M_k\} \).
\STATE \textbf{Graph Construction:} Compute the affinity matrix \( \bm{W} \in \mathbb{R}^{N \times N} \) from the transformer features and form the degree matrix 
\[
\bm{D} = \mathrm{diag}(d_1,\dots,d_N),\quad d_i = \sum_{j} {W}_{ij}.
\]
\STATE \textbf{Initialization:} Initialize soft mask assignment matrix \( \bm{X} \in \mathbb{R}_+^{N \times K} \) (with each row summing to 1) and auxiliary variables \( y_k \) (e.g., \( y_k = 1 \)).
\FOR{\( t=1,\dots,T_{\mathrm{cuts}} \)}
    \FOR{\( k=1,\dots,K \)}
        \STATE \( y_k \gets \sqrt{\frac{\bm{x}_k^\top \bm{W} \bm{x}_k}{\bm{x}_k^\top \bm{D} \bm{x}_k}} \)
    \ENDFOR
    \FOR{\( i=1,\dots,N \)}
        \FOR{\( k=1,\dots,K \)}
            \STATE \( \bm{X}_{ik} \gets \mathrm{Softmax}_k\!\left( \frac{\sum_j {W}_{ij} {X}_{jk} }{\sum_j {X}_{jk} {D}_{jj} + \epsilon} y_k\right) \)
        \ENDFOR
    \ENDFOR
    \STATE \textbf{(Optional)} Update \( \bm{W} \) as
    \[
    {W}_{ij} \gets {W}_{ij} \exp\!\left( -\frac{\left(1 - \frac{\langle \bm{X}_i, \bm{X}_j \rangle}{\|\bm{X}_i\|_2 \|\bm{X}_j\|_2}\right)^2}{\beta} \right).
    \]
\ENDFOR
\STATE Map the final assignments in \( \bm{X} \) to obtain the low-resolution masks \( \{M_k\} \).
\end{algorithmic}
\end{algorithm}

\begin{algorithm}[ht]
\caption{Falcon: High-Resolution Mask Refinement (DREAM)}
\label{alg:falcon_refine}
\begin{algorithmic}[1]
\REQUIRE 
\begin{itemize}
    \item RGB image \( \bm{x}_{\mathrm{rgb}} \in \mathbb{R}^{B\times 3\times H\times W} \), depth map \( \bm{x}_{\mathrm{depth}} \in \mathbb{R}^{B\times 1\times H\times W} \).
    \item Initial low-resolution mask \( M^{(0)} \) (from Algorithm~\ref{alg:falcon_lowres}).
    \item Parameters: \( \lambda \) (ELU scale), fusion weights \( \alpha_{\mathrm{rgb}}, \alpha_{\mathrm{depth}} \), number of refinement iterations \( T_{\mathrm{ref}} \), stability constant \( \epsilon \), etc.
\end{itemize}
\ENSURE Refined high-resolution segmentation mask \( M \in \mathbb{R}^{H \times W} \).
\STATE \textbf{Affinity Computation:} For feature map \( \bm{x} \), define the local affinity operator over an 8-connected neighborhood \( \mathcal{N} \):
\[
\mathcal{A}(\bm{x}) = \sum_{(i,j)\in\mathcal{N}} \left[ (\bm{x}_{i,j} - \bm{x}_c) + \lambda \mathrm{ELU}(\bm{x}_{i,j} - \bm{x}_c) \right],
\]
where \( \bm{x}_c \) is the feature at the center pixel. Normalize the affinity as:
\[
\mathcal{A}_{\mathrm{norm}}(\bm{x}) = -\frac{\mathcal{A}(\bm{x})}{\epsilon + 0.1 \sigma(\bm{x})},
\]
with \( \sigma(\bm{x}) \) denoting the standard deviation of \( \bm{x} \).
\STATE Compute \( \bm{A}_{\mathrm{rgb}} = \mathcal{A}_{\mathrm{norm}}(\bm{x}_{\mathrm{rgb}}) \), \( \bm{A}_{\mathrm{depth}} = \mathcal{A}_{\mathrm{norm}}(\bm{x}_{\mathrm{depth}}) \).
\STATE \textbf{Affinity Fusion:} Fuse the modalities:
\[
\bm{A} \gets \alpha_{\mathrm{rgb}} \bm{A}_{\mathrm{rgb}} + \alpha_{\mathrm{depth}} \bm{A}_{\mathrm{depth}}.
\]
\FOR{\( t=0,\dots,T_{\mathrm{ref}}-1 \)}
    \STATE Update the mask:
    \[
    M^{(t+1)} \gets \sum_{(i,j)\in\mathcal{N}} M^{(t)}_{i,j} \bm{A}_{i,j}.
    \]
\ENDFOR
\STATE \RETURN \( M^{(T_{\mathrm{ref}})} \).
\end{algorithmic}
\end{algorithm}


\end{document}